\def\eqref#1{equation~\ref{#1}}
\def\1{\bm{1}}
\DeclareMathAlphabet{\mathsfit}{\encodingdefault}{\sfdefault}{m}{sl}
\SetMathAlphabet{\mathsfit}{bold}{\encodingdefault}{\sfdefault}{bx}{n}
\newtheorem{theorem}{Theorem}[section]
\newtheorem{proposition}[theorem]{Proposition}
\newtheorem{corollary}[theorem]{Corollary}
\theoremstyle{definition}
\theoremstyle{remark}
\newtheorem{remark}[theorem]{Remark}
\title{Robust Decision Making with Partially Calibrated Forecasts}
\author[1]{Shayan Kiyani}
\author[1]{Hamed Hassani}
\author[1]{George Pappas}
\author[1]{Aaron Roth}
\affil[1]{University of Pennsylvania}
\definecolor{hh}{RGB}{0,0,255}
\begin{document}

\maketitle
\renewcommand\thefootnote{}
\footnotetext{Correspondence to: \texttt{shayank@seas.upenn.edu}.}
\addtocounter{footnote}{-1}
\renewcommand\thefootnote{\arabic{footnote}}

\begin{abstract}
Calibration has emerged as a foundational goal in ``trustworthy machine learning'', in part because of its strong decision theoretic semantics. Independent of the underlying distribution, and independent of the decision maker's utility function, calibration promises that amongst all policies mapping predictions to actions, the uniformly best policy is the one that ``trusts the predictions'' and acts as if they were correct. But this is true only of \emph{fully calibrated} forecasts, which are tractable to guarantee only for very low dimensional prediction problems. For higher dimensional prediction problems (e.g. when outcomes are multiclass), weaker forms of calibration have been studied that lack these decision theoretic properties. In this paper we study how a conservative decision maker should map predictions endowed with these weaker (``partial'') calibration guarantees to actions, in a way that is robust in a minimax sense: i.e. to maximize their expected utility in the worst case over distributions consistent with the calibration guarantees. We characterize their minimax optimal decision rule via a duality argument, and show that surprisingly, ``trusting the predictions and acting accordingly'' is recovered in this minimax sense by \emph{decision calibration} (and any strictly stronger notion of calibration), a substantially weaker and more tractable condition than full calibration. For calibration guarantees that fall short of decision calibration, the minimax optimal decision rule is still efficiently computable, and we provide an empirical evaluation of a natural one that applies to any regression model solved to optimize squared error.  
\end{abstract}
\section{Introduction}
Machine learning systems are increasingly deployed in high-stakes decision making domains such as healthcare, finance, and law. The predictive power of these models can be extraordinary, but scoring well on  predictive error metrics does not directly guarantee that decisions downstream of those predictions will be correct. For predictions to be operationally useful, a decision-maker must be able to treat them as reliable inputs into a downstream decision making policy. This raises two fundamental questions: 

\begin{center}
\textbf{On the Model Side:} \textit{What does it mean for machine learning predictions to be trustworthy in decision-making contexts?}
\end{center}

\begin{center}
\textbf{On the Decision Making Side:} \textit{Given predictions that satisfy a particular type of ``trustworthiness'', how should the decision maker adapt its actions to the promised guarantees?}
\end{center}


\textbf{On the Model Side:} A natural answer is that trustworthy predictions should directly support good decisions as they are. In other words, the decision-maker should be able to reliably best respond to the forecaster’s predictions as if they were correct. Formally, let $(X,Y)$ be a pair of random variables drawn from a joint distribution $\mathcal{D}$, where $X \in \mathcal{X}$ represents the observed features and $Y \in [0,1]^d$ is the outcome of interest. Let $\mathcal{A}$ denote the action set, and suppose the decision-maker follows a policy $a(\cdot): [0,1]^d \to \mathcal{A}$ mapping predictions to actions. Given a predictor $f$, the decision maker's performance when using a policy $a$ is measured by its expected utility on the underlying distribution:
$$
\mathbb{E}_{(X,Y)\sim \mathcal{D}}[u(a(f(X)),Y)],
$$
where $u(a,y) \in \mathbb{R}$ is a utility function. Given a forecaster $f: \mathcal{X} \to [0,1]^d$, the \emph{plug-in best response} to a forecast is defined as
\begin{equation} \label{best_response}
a_{\rm BR}(f(x)) \;=\; \arg\max_{a \in \mathcal{A}} \; u(a, f(x)).
\end{equation}
Thus, a forecaster $f$ is trustworthy if the decision-maker’s best-response policy $a_{\rm BR}(f(x))$ achieves higher utility than any other policy. When is this the case? 

 The classical answer lies in the notion of \emph{calibration}. Intuitively, a forecaster is calibrated if, whenever it predicts a vector $f(x) = v \in [0,1]^d$, the empirical outcomes are consistent with that prediction. More formally, a forecaster $f$ is said to be \emph{fully calibrated} if for every $v \in [0,1]^d$,
\[
\mathbb{E}[Y \mid f(X) = v] = v.
\]
It is well known that best responding to calibrated forecasts is the optimal decision policy among all policies that map forecasts to actions \citep{FV98,kleinberg2023u,noarov2023high,roth2022uncertain}.

However, achieving full calibration is extremely difficult, both in theory---the sample complexity of calibrating an existing forecaster without harming its accuracy grows exponentially with the outcome dimension $d$ \citep{gopalan2024computationally}---and in practice, where empirical evidence shows systematic deviations from calibration, ranging from neural networks to large language models \citep{guo2017calibration,kull2019beyond,gupta2022top,plaut2024probabilities}.
Thus, despite the appealing link between calibration and trustworthy ML-powered decision-making, this connection quickly breaks down in real-world applications.

\textbf{On the Decision Making Side:} Decision making from predictions admits two canonical extremes. 
At one end, the decision maker \emph{aggressively best responds} to the forecasts, acting as if they were fully correct. 
At the other end, the decision maker \emph{conservatively plays a minimax-safety strategy}, 
\(\arg\max_{a\in\mathcal{A}}\min_{y\in\mathcal{Y}} u(a,y)\), treating the forecasts as if they carried no information about the instance.

Departing from these extremes, we treat a model $f$ and it's forecast \(f(x)\) as information that constrains what the true, 
instance-conditional outcome distribution could be. In other words, after observing \(f(x)\), the decision maker considers the set 
of \emph{candidate realities}---outcome distributions consistent with the forecast and the available calibration guarantees. 
Intuitively, the “volume’’ of this set is governed by the strength of calibration: under full calibration, the set collapses 
to the forecast itself (the prediction can be treated as reality, at least in expectation), whereas as calibration weakens, the set enlarges. 
A principled decision rule should therefore \emph{tune its conservatism to what the reality could be}, consistent with the provided guarantees. 
This idea, together with the fragility of full calibration in practice, leads to the central question of this paper:
\emph{can we derive optimal decision-making policies under weaker and more practical conditions than full calibration?}

We answer this question affirmatively. We introduce a framework based on \emph{conservative} decision making that nevertheless fully 
exploits \emph{partially} calibrated forecasts. This viewpoint echoes ideas in robust 
optimization and control, but it has not been systematically developed for post hoc decision making with partially calibrated 
machine-learning forecasts.
\subsection{Our Results}
We consider a parameterized family of weighted calibration guarantees that have recently become a popular object of study \citep{hebert2018multicalibration,gopalan2022low}. Informally speaking this family of guarantees constrains the residuals of a predictor $f$ to be uncorrelated with a collection of ``test functions'' $h \in \mathcal{H}$ mapping the range of $f$ to the reals. When $\mathcal{H}$ consists of all such test functions, we recover full calibration, but many popular variants of calibration (e.g. top label calibration, decision calibration, etc) can be expressed as instances of $\mathcal{H}$-calibration under much smaller/more tractable sets $\mathcal{H}$. Our contributions are as follows:  
\begin{enumerate}
    \item In Section \ref{sec:formulation} we formalize the following question: given a set of test functions $\mathcal{H}$ and a  predictor $f(x)$ that is promised to satisfy $\mathcal{H}$-calibration, what decision rule $a:[0,1]^d\rightarrow \mathcal{A}$, mapping predictions to actions, will maximize a decision maker's expected utility  in the worst case over all joint distributions over $X\times Y$ that are consistent with the promise that $f$ is $\mathcal{H}$-calibrated?
    \item In Section \ref{sec:dual} we answer this question by giving a closed-form for the decision maker's optimal decision rule, in terms of the dual variables of a convex program that can be efficiently computed for any finite $\mathcal{H}$. 
    \item In Section \ref{sec:instantiations} we instantiate this decision rule for various calibration guarantees of interest. Of particular note, we find that when $\mathcal{H}$ corresponds to the tractable notion of \emph{decision calibration} \citep{zhao2021calibrating,noarov2023high}, then the optimal decision rule is the best response decision rule $a_{\rm BR}$, just as it is for (the intractable notion of) full calibration. In fact, it suffices that $\mathcal{H}$ \emph{contains} the decision calibration constraints --- any larger set \emph{also} makes best response the optimal decision rule. Thus what could have been a very large hierarchy of minimax optimal decision rules ``collapses'' to best response at the level of decision calibration. An upshot of this is that a predictor can be simultaneously decision calibrated for many downstream decision makers, and for each of them, best response will be their optimal decision policy in this minimax sense. We also derive the minimax optimal decision rule for a simple ``self-orthogonality'' calibration condition that will hold for any regression model with a linear final layer trained to optimize  squared loss, and hence will be commonly satisfied without any algorithmic intervention. 
    \item In Section \ref{sec:experiments} we train a two-layer MLP to minimize squared loss on two regression datasets, and evaluate both the best-response decision rule and the robust decision rule that results from the self-orthogonality condition of squared error regression. We find that, as predicted by our theory, the robust decision rule outperforms the best-response decision rule under calibration-preserving distribution shift, and that the cost of this robustness is mild even under ideal conditions. 
\end{enumerate}


\subsection{Related Work}\label{rel_work}
\cite{rothblum2023decision} consider a setting in which both the outcome and decision maker's action set are binary, and study how a decision maker should act to minimize their worst case regret over distributions such that the predictor has maximum calibration error bounded by $\alpha$: informally that $|\mathbb{E}[Y|f(x)=v]-v| \leq \alpha$ for all $v$. The models $f$ they study are (approximately) fully calibrated, which is a reasonable assumption in their setting, since they limit their study to 1-dimensional outcomes. In contrast, our interest is not (just) in quantitative measures of full calibration error,  but rather qualitatively weaker calibration guarantees, as even approximate full calibration becomes intractable in high dimensions. 

A line of recent work \citep{zhao2021calibrating,kleinberg2023u,noarov2023high,roth2024forecasting,hu2024predict,okoroafor2025near} has studied the guarantees that can be given to downstream decision makers who best respond to predictions that have weaker guarantees than full calibration (and which in the cases of \cite{zhao2021calibrating,noarov2023high,roth2024forecasting} can be tractably guaranteed in higher dimensional outcome settings). These guarantees take the form of (external and swap) \emph{regret} bounds, which are qualitatively weaker than the kind of ``trustworthiness'' promised by full calibration. Informally, regret bounds promise that the decision maker could not have done better by consistently playing a fixed action (or a fixed function remapping their actions to other actions), not that they could not have done better by using a different policy from predictions to actions. We show that even in high dimensions, the tractable ``decision calibration'' condition given by \cite{zhao2021calibrating} recovers the same ``trustworthiness'' semantics of full calibration when viewed through our minimax decision making lens. 

Analyzing minimax optimal decision policies is a common way of analyzing \emph{robust} or \emph{risk-averse} decision making guarantees, with deep roots in economics \citep{gilboa1989maxmin,hansen2001robust,manski2000identification,manski2004statistical,manski2007admissible,manski2011choosing}, statistics \citep{wald1950statistical}, and robust optimization \citep{ben2002robust,kuhn2019wasserstein, duchi2021learning}. For example, \cite{carroll2015robustness} adopts this lens this in the context of contract theory and \cite{kiyani2025decisiontheoreticfoundationsconformal} and \cite{andrews2025certified} do so in the context of conformal prediction. 
To the best of our knowledge, we are the first to apply this ``robust'' minimax lens to the problem of partially calibrated high dimensional forecasts.

\section{Robust Decision Making and $\mathcal{H}$-calibration}
\label{sec:formulation}
In this Section, we define $\mathcal{H}$-calibration as a flexible relaxation of full calibration and then introduce a framework to derive minimax optimal decision making policies that are designed to act on forecasters guaranteed to satisfy $\mathcal{H}$-calibration. This family of calibration guarantees has been studied extensively in the recent literature on multicalibration and its extensions \citep{hebert2018multicalibration,dwork2021outcome,gopalan2022low,deng2023happymap} --- in particular, $\mathcal{H}$-calibration is a special case of what \cite{gopalan2022low} call weighted multicalibration.  

\textbf{$\mathcal{H}$-Calibration.}  
Let $\mathcal{H}$ be a set of functions $h: [0,1]^d \to \mathbb{R}$. A forecaster $f$ is said to be \textbf{$\mathcal{H}$-calibrated} if for every $h \in \mathcal{H}$,
\begin{align}\label{calib_property}
\mathbb{E}\big[\,h(f(X)) \cdot (Y - f(X))\,\big] = 0.
\end{align}
Equivalently, writing $q(v) := \mathbb{E}[Y \mid f(X)=v]$ for the true conditional expectation, $\mathcal{H}$-calibration requires
\begin{align}\label{H-calib}
\mathbb{E}\big[\,h(f(X)) \cdot (q(f(X)) - f(X))\,\big] = 0, \quad \forall h \in \mathcal{H}.
\end{align}
This definition captures a spectrum of guarantees. When $\mathcal{H}$ contains all bounded measurable functions, $\mathcal{H}$-calibration reduces to full calibration --- i.e. it requires that $f(v) = q(v) := \mathbb{E}[Y \mid f(X) = v]$ almost surely. For smaller classes $\mathcal{H}$, the requirement is weaker and can be seen as a relaxation of calibration, enforcing consistency only with respect to a restricted set of tests.

\textbf{Robust Decision Making.}  
Fix an $\mathcal{H}$-calibrated forecaster $f$. Define the set
\begin{align}\label{Q-def}
    \mathcal{Q} \;=\; \Big\{q : [0,1]^d \to [0,1]^d \;\;\big|\;\; \mathbb{E}\big[\,h(f(X)) \cdot (q(f(X)) - f(X))\,\big] = 0, \;\; \forall h \in \mathcal{H}\Big\}.
\end{align}
In words, $\mathcal{Q}$ consists of all candidate conditional expectations consistent with $f$ satisfying $\mathcal{H}$-calibration. Because the perfect predictor $f(X) = \mathbb{E}[Y|X]$ satisfies $\mathcal{H}$-calibration for every $\mathcal{H}$, the identity map $q(v)=v$ is always in $\mathcal{Q}$---but in general the set may contain many maps. From the perspective of the decision-maker who knows $f$ and the promised calibration guarantee $\mathcal{H}$, but does not know the underlying distribution, given a forecast $f(x)$, the true expectation $\mathbb{E}[Y \mid f(x)]$ is uncertain but must lie within $\mathcal{Q}$. As $\mathcal{H}$ grows richer, $\mathcal{Q}$ shrinks, eventually reducing to $\{q(v)=v\}$ in the case of full calibration.


Faced with this uncertainty, a natural strategy is to adopt a robust policy that guards against the worst-case admissible reality. Formally, the robust decision rule is
\begin{align}\label{robust}
a_{\mathrm{robust}}(\cdot) \;=\; \underset{a(\cdot):[0,1]^d\to \mathcal{A}}{\arg\max} \;\; \min_{q \in \mathcal{Q}} \; \mathbb{E}\big[u(a(f(X)), q(f(X)))\big].
\end{align}
That is, the decision-maker chooses an action policy that maximizes utility under the worst-case conditional expectation consistent with calibration guarantees.

\begin{figure} 
  \centering
\includegraphics[width=.95\linewidth]{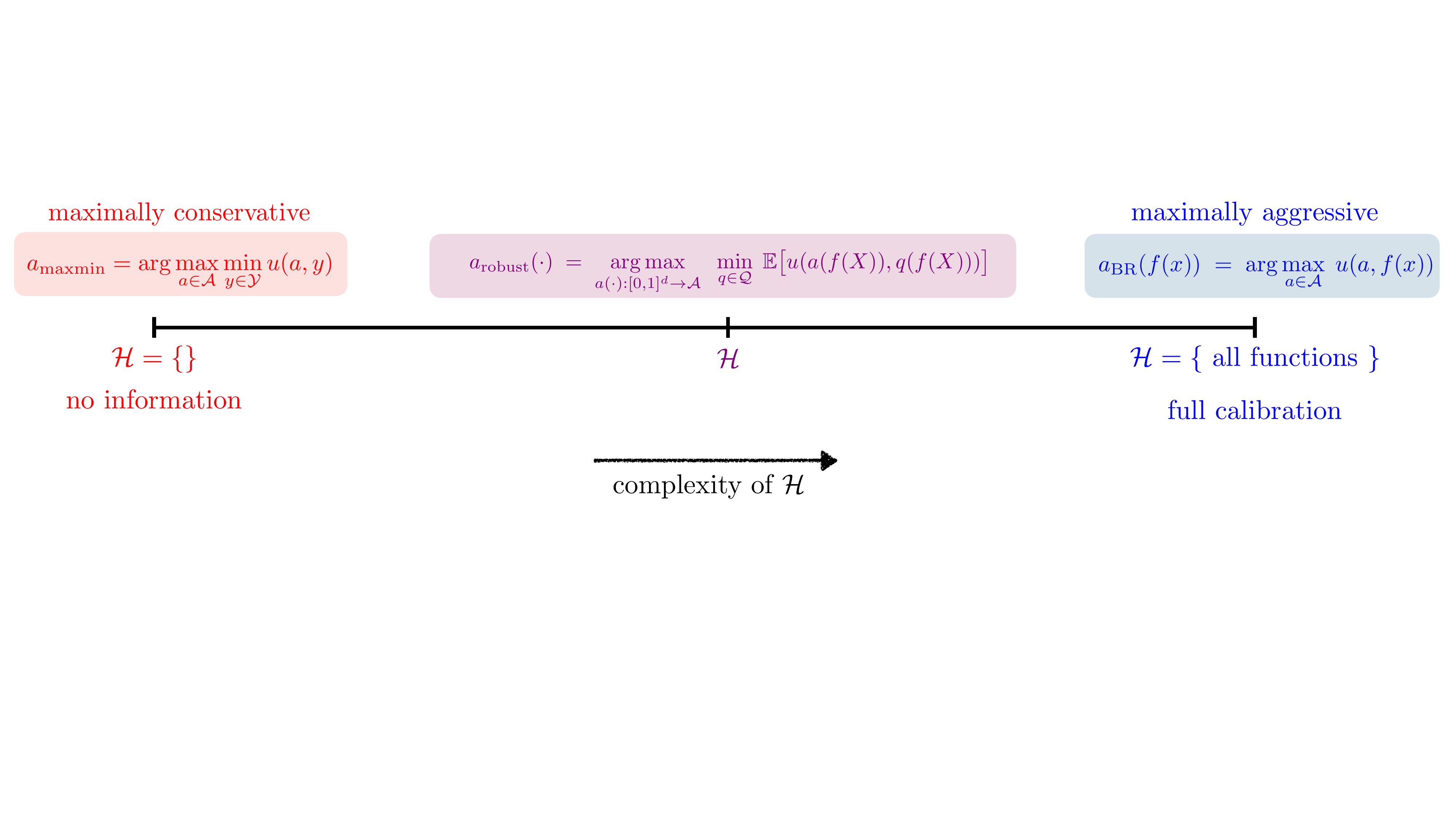}
  \caption{Schematic of the interpolating property}
  \label{fig:ch-c}
\end{figure}
\textbf{Interpolating Property.} The robust policy defined in Equation \ref{robust} interpolates between two classical extremes (see Figure \ref{fig:ch-c}). When $\mathcal{H}$ contains all functions, the set $\mathcal{Q}$ reduces to the singleton $\{q(v)=v\}$, and $a_{\mathrm{robust}}$ coincides with the best-response policy $a_{\rm BR}(\cdot)$ (see \eqref{best_response}), aggressively using the prediction as if it were correct. At the opposite extreme, when $\mathcal{H}$ is empty, the set $\mathcal{Q}$ consists of all functions and the robust policy collapses to the constant \emph{minimax safety strategy}, conservatively optimizing for the worst case, assuming that the predictor $f$ has no relationship to the outcome.
\[
a_{\mathrm{Minimax}}(x) \;=\; \arg\max_{a \in \mathcal{A}} \;\min_{y \in [0,1]^d} u(a,y), \quad \forall x \in \mathcal{X},
\]
Thus, the policy defined by Equation \ref{robust} provides a principled bridge between best-responding to calibrated forecasts and adopting fully conservative strategies. More broadly, $\mathcal{H}$-calibration offers a flexible language to describe varying levels of conservatism in decision-making, determined by the richness of the calibration guarantees available to the decision-maker.


The central theme of the remainder of this paper is to investigate the interaction between different levels of $\mathcal{H}$-calibration and the resulting optimal robust policies. Our focus is not on developing methods for achieving $\mathcal{H}$-calibration itself (for which we refer the reader to a rich line of recent work showing how to accomplish this in both the batch and online adversarial setting \citep{hebert2018multicalibration,gopalan2022low,deng2023happymap,noarov2023high,globus2023multicalibration}), but rather on understanding the decision-making consequences once such guarantees are in place. In the next section, we begin by analyzing the general problem of deriving optimal robust decision rules for arbitrary classes $\mathcal{H}$. We then specialize to the important case of decision calibration, showing that this weaker and more practical notion identifies large classes of partially calibrated forecasters for which best responding remains optimal. Beyond its theoretical appeal, this result is also practically useful: when a decision-maker can influence the design or post-processing of the forecaster, they can request a decision-calibrated forecaster, to which they can then simply, reliably, and optimally best respond.

\begin{remark}
    Throughout this paper, we assume the utility function $u(a,v)$  is linear in its second argument $v\in[0,1]^d$ for each $a \in \mathcal{A}$. This captures, for example, settings in which $v$ represents a distribution over $d$ outcomes, and the decision maker has \emph{arbitrary} utilities for each action/outcome pair, and wishes to maximize their expected utility given uncertainty (here, the linearity in $v$ follows from the linearity of expectation). This assumption is only more general than assuming that the decision maker is \emph{risk neutral} (i.e. an expectation maximizer), which is the assumption that underlies almost all of the literature on calibration and decision making (e.g. \citep{FV98,kleinberg2023u,roth2024forecasting})
\end{remark}

\section{Optimal Decision Policies for Finite Dimensional $\mathcal{H}$-Calibration}
\label{sec:dual}
In this section, we characterize the optimal robust decision making policies, i.e., solutions to Equation \ref{robust}. Throughout this section, we assume the function class $\mathcal{H}$ is a finite dimensional space, i.e. it can be described as span of finitely many functions. Formally, let $\mathcal{H}=\mathrm{span}\{h_1,\ldots,h_k\}$ be the linear class generated by measurable $h_i:[0,1]^d\to\mathbb{R}$. Then the $\mathcal{H}$-calibration condition~\eqref{H-calib} is equivalent to the $k$ linear moment equalities
\[
\mathbb{E}\big[h_i\!\left(f(X)\right)\cdot(\,q(f(X)) - f(X)\,)\big]=0,\qquad i=1,\ldots,k,
\]
so that the ambiguity set in~\eqref{Q-def} may be written as
\[
\mathcal{Q}=\Big\{\,q:[0,1]^d\!\to[0,1]^d\;\Big|\;\mathbb{E}\big[h_i(f(X))\cdot(\,q(f(X))-f(X)\,)\big]=0\;\text{ for }i=1,\ldots,k\Big\}.
\]
Intuitively, each equality enforces that, conditional on the forecast, the forecast error has zero correlation with the corresponding test $h_i$; taken together, these constraints exhaust the information provided by $\mathcal{H}$-calibration criteria and hence precisely describe the admissible reality faced by the robust decision-maker in~\eqref{robust}. 

\begin{theorem}[Characterization of the Optimal Robust Policy]
\label{thm:finiteH}
Suppose $\mathcal{H}=\mathrm{span}\{h_1,\ldots,h_k\}$ with each $h_i:[0,1]^d\to\mathbb{R}$, and let $\mathcal{Q}$ be defined as above. Then the minimax problem in Equation \ref{robust} admits a saddle point $(a_{\mathrm{robust}},q^\star)$ with the following structure:

There exist multipliers $\lambda^\star=(\lambda_1^\star,\ldots,\lambda_k^\star)$ with each $\lambda_i^\star\in\mathbb{R}^d$ such that for almost every forecast $v=f(x)$ the worst-case map $q^\star(v)$ solves
\[
q^\star(v)\;\in\;\arg\min_{p\in[0,1]^d}\Big\{\,\mathrm{val}(p)
+p\cdot \sum_{i=1}^k h_i(v)\lambda_i^\star\,\Big\},
\quad
\text{where }\mathrm{val}(p)=\max_{a\in\mathcal{A}}u(a,p).
\]
Given $q^\star$, the optimal robust action at $v$ is the best response to $q^\star(v)$:
\[
a_{\mathrm{robust}}(v)\;\in\;\arg\max_{a\in\mathcal{A}}\,u\big(a,q^\star(v)\big).
\]
\end{theorem}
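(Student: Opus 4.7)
The plan is to attack the min--max in three steps: exchange the outer $\max_{a(\cdot)}$ with the inner $\min_{q\in\mathcal{Q}}$, Lagrangify the $k$ linear moment constraints defining $\mathcal{Q}$, and exploit the fact that once the multipliers are fixed the inner problem decouples across forecast values. Two structural facts make this work. First, because $u(a,\cdot)$ is linear, the upper envelope $\mathrm{val}(p)=\max_{a\in\mathcal{A}} u(a,p)$ is a supremum of affine functions and hence convex on $[0,1]^d$. Second, $\mathcal{Q}$ is a convex set, carved from the pointwise box $\{q(v)\in[0,1]^d\}$ by $k$ vector-valued affine equality constraints, and it is non-empty (the identity $q(v)=v$ always belongs). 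Together these put the problem in the standard setting where Sion-type minimax and Lagrangian duality apply.

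\textbf{Step 1: minimax exchange and reduction to a convex program.} First I would extend the outer player to randomized pointwise policies, so that $\mathbb{E}[u(a(V),q(V))]$ becomes bilinear on a product of convex sets (mixed policies and $q\in\mathcal{Q}$). Under the mild regularity that $\mathcal{A}$ is compact and $u$ continuous, Sion's theorem gives
$$\max_{a(\cdot)}\min_{q\in\mathcal{Q}}\mathbb{E}[u(a(V),q(V))]\;=\;\min_{q\in\mathcal{Q}}\max_{a(\cdot)}\mathbb{E}[u(a(V),q(V))]\;=\;\min_{q\in\mathcal{Q}}\mathbb{E}[\mathrm{val}(q(V))],$$
where the last equality uses that $a$ can be chosen pointwise in $v$. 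This leaves a convex program in $q$.

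\textbf{Step 2: dualization and pointwise decoupling.} Introduce $\lambda=(\lambda_1,\dots,\lambda_k)\in(\mathbb{R}^d)^k$ for the $k$ vector equality constraints and form
$$L(q,\lambda)\;=\;\mathbb{E}\Big[\mathrm{val}(q(V))+q(V)\cdot \sum_{i=1}^k h_i(V)\lambda_i\Big]\;-\;\sum_{i=1}^k \lambda_i\cdot\mathbb{E}[h_i(V)V].$$
Convexity of $\mathrm{val}$, affineness of the constraints, and feasibility of $q(v)=v$ (so the program is qualified) give strong duality: $\min_q\sup_\lambda L=\sup_\lambda\min_q L$, with both attained at some $(q^\star,\lambda^\star)$. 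At fixed $\lambda^\star$, the Lagrangian $L(\cdot,\lambda^\star)$ decouples across $v$, so its minimizer is obtained pointwise, yielding exactly
$$q^\star(v)\in\arg\min_{p\in[0,1]^d}\Big\{\mathrm{val}(p)+p\cdot \sum_{i=1}^k h_i(v)\lambda_i^\star\Big\}.$$
A measurable selector (via Kuratowski--Ryll-Nardzewski) makes $q^\star$ measurable; saddle-point complementary slackness then forces $q^\star\in\mathcal{Q}$. Setting $a_{\mathrm{robust}}(v)\in\arg\max_{a\in\mathcal{A}} u(a,q^\star(v))$ and invoking the swap from Step~1 closes the saddle point.

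\textbf{Main obstacle.} The delicate point is the minimax exchange in Step~1: the policy class $\{a(\cdot):[0,1]^d\to\mathcal{A}\}$ is large and not obviously equipped with a topology that makes Sion's theorem apply verbatim. I would handle this by noting that the objective is pointwise separable in $v$, so the exchange reduces to a family of one-shot minimax problems---each bilinear on convex sets after randomizing the action, hence covered by Sion---plus routine measurability arguments to glue the pointwise solutions into measurable $a_{\mathrm{robust}}$ and $q^\star$. Verifying qualification for the affine equality constraints and the measurable selections are standard but not zero-work.
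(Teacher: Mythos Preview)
Your proposal is correct and follows essentially the same route as the paper: apply Sion's minimax theorem to swap the $\max_{a(\cdot)}$ and $\min_{q\in\mathcal{Q}}$, collapse the inner maximization to $\mathbb{E}[\mathrm{val}(q(f(X)))]$ via pointwise best response, then dualize the resulting convex program in $q$ against its $k$ affine moment constraints and exploit the pointwise decoupling of the Lagrangian to obtain the stated characterization of $q^\star$ and $a_{\mathrm{robust}}$. The paper's argument is slightly more terse on the applicability of Sion (it asserts the convex--concave structure directly rather than passing through randomized policies or one-shot reductions), but the skeleton and all substantive steps are identical.
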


\textbf{Interpretation.}
Theorem~\ref{thm:finiteH} provides a transparent description of both the worst-case distribution the decision-maker may face (up to the information encoded by $\mathcal{H}$-calibration) and the corresponding optimal response. Operationally, for any realized forecast $\nu=f(x)$ the theorem prescribes a two-step procedure: first compute the adversarial belief
\[
q^\star(\nu)\in\arg\min_{p\in[0,1]^d}\Big\{\mathrm{val}(p)+p\cdot s^\star(\nu)\Big\},
\qquad s^\star(\nu):=\sum_{i=1}^k h_i(\nu)\,\lambda_i^\star,
\]
and then best respond to $q^\star(\nu)$, i.e.\ take $a_{\mathrm{robust}}(\nu)\in\arg\max_{a\in\mathcal{A}}u(a,q^\star(\nu))$. Thus, the optimal policy is \emph{always} a best response, but, in general, not to the raw forecast $f(x)$; rather, it best responds to the adversarially tilted distribution $q^\star(\nu)$ that is chosen to be most challenging under the calibration constraints. A useful byproduct of the theorem is \emph{pointwise computability}: although $a_{\mathrm{robust}}$ is a a-priori a policy on $\mathcal{X}$, the characterization reduces its evaluation at a given $\nu=f(x)$ to solving two low-dimensional problems, without constructing the entire mapping $x\mapsto a_{\mathrm{robust}}(x)$.

From an optimization perspective, the multipliers $\lambda^\star$ solve a finite-dimensional concave maximization problem (see the proof of Theorem \ref{thm:finiteH}), and $q^\star(\nu)$ is obtained by a pointwise convex minimization over $p\in[0,1]^d$. Both stages can be carried out by standard, fast methods with provable guarantees (e.g., projected subgradient ascent for the dual, or a simple primal–dual scheme), after which one evaluates $q^\star(\nu)$ via the pointwise minimization and takes the best response $a_{\mathrm{robust}}(\nu)=\arg\max_{a}u(a,q^\star(\nu))$.

In the next section, we analyze the behavior of the resulting decision rules by specializing to concrete $\mathcal{H}$-classes. One might expect that Theorem~\ref{thm:finiteH} induces a vast and intricate hierarchy of policies whose form depends sensitively on $\mathcal{H}$. \emph{Perhaps surprisingly, this is not the case.} In particular, we show a sharp transition: for each decision maker, there exists a specific test class, precisely the one associated with \emph{decision calibration}, such that as soon as $\mathcal{H}$ contains this class, the adversarial tilt collapses ($q^\star(\nu)=\nu$ for a.e.\ $\nu$) and the optimal robust rule reduces to the plug-in best response to the forecaster. This places the spotlight on \emph{decision-calibrated} forecasters as a practical, strictly weaker alternative to full calibration that nonetheless confers decision-theoretic trustworthiness: the decision-maker can simply, reliably, and optimally best respond to the forecast. 

\section{Instantiations of the Robust Policy: Decision Calibration and Beyond}
\label{sec:instantiations}
In this section, we specialize the general characterization derived in Theorem \ref{thm:finiteH} to concrete test classes $\mathcal{H}$ and derive the associated action policies. Our core result concerns \emph{decision calibration}: a practically tractable guarantee under which the minimax-optimal robust policy collapses to the plug-in (best-response) rule. This identifies a simple and operational path to decision-theoretic trustworthiness that does not require full calibration.

\subsection{Decision Calibration and Plug-in Best Response Optimality}
Here we define the variant of decision calibration given by \cite{noarov2023high}, a slight strengthening of the definition originally given by \cite{zhao2021calibrating}. 
Fix a single decision problem with action set $\mathcal{A}$ and utility function $u(a,v)$. For each action $a\in\mathcal{A}$, let
\[
R_a \;=\; \big\{\, v\in[0,1]^d \;:\; u(a,v)\;\ge\;u(a',v)\;\text{ for all }a'\in\mathcal{A}\,\big\}
\]
be the (closed, convex) decision region on which $a$ is a plug-in best response. The \emph{decision-calibration class} is
\[
\mathcal{H}_{\mathrm{dec}} \;=\; \{\, \mathbf{1}_{R_a} \,:\, a\in\mathcal{A}\,\}.
\]
Here, we denote $\mathbf{1}_{A}(x) := \mathbf{1}\{x \in A\} $. 
A forecaster $f$ is \emph{decision calibrated} if it is $\mathcal{H}_{\mathrm{dec}}$-calibrated, i.e.,
\[
\mathbb{E}\!\left[\mathbf{1}_{R_a}\!\big(f(X)\big)\,\big(Y - f(X)\big)\right] \;=\; 0
\quad\text{for all } a\in\mathcal{A}.
\]

\begin{theorem}[Decision calibration $\Rightarrow$ plug-in best response optimality]
\label{thm:decision-calibration}
If $f$ is $\mathcal{H}_{\mathrm{dec}}$-calibrated, then the minimax-optimal robust rule in \eqref{robust} coincides with the plug-in best response:
\[
a_{\mathrm{robust}}(v)\;\in\;\arg\max_{a\in\mathcal{A}} u(a,v)\qquad\text{for almost every } v=f(x).
\]
Equivalently, under decision calibration, best responding to the forecaster is minimax optimal among all forecast-based policies.
\end{theorem}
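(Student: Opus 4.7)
My strategy is to bypass the general dual characterization of Theorem~\ref{thm:finiteH} and instead verify directly that the plug-in best response $a_{\mathrm{BR}}(v)=\arg\max_{a}u(a,v)$ is minimax optimal by (i) showing that under decision calibration, the inner adversarial value $\min_{q\in\mathcal{Q}}\mathbb{E}[u(a_{\mathrm{BR}}(f(X)),q(f(X)))]$ is actually \emph{constant} in $q$ and equals $\mathbb{E}[u(a_{\mathrm{BR}}(f(X)),f(X))]$, and then (ii) noting that any competitor policy $a(\cdot)$ is pointwise dominated by $a_{\mathrm{BR}}$ against the identity map $q(v)=v$, which lies in $\mathcal{Q}$.

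\textbf{Key step (invariance under $\mathcal{Q}$).} The decision regions $\{R_a\}_{a\in\mathcal{A}}$ partition $[0,1]^d$ (up to ties on boundaries, which I would handle by any measurable tie-breaking rule), and on $R_a$ the policy $a_{\mathrm{BR}}$ plays $a$. Hence, writing $u(a,v)=\langle w_a,v\rangle+c_a$ to exploit the linearity assumption stated in the excerpt's remark,
\begin{align*}
\mathbb{E}\bigl[u(a_{\mathrm{BR}}(f(X)),q(f(X)))\bigr]
&=\sum_{a\in\mathcal{A}}\mathbb{E}\bigl[\mathbf{1}_{R_a}(f(X))\,u(a,q(f(X)))\bigr] \\
&=\sum_{a\in\mathcal{A}}\Bigl(\bigl\langle w_a,\,\mathbb{E}[\mathbf{1}_{R_a}(f(X))\,q(f(X))]\bigr\rangle + c_a\,\mathbb{E}[\mathbf{1}_{R_a}(f(X))]\Bigr).
\end{align*}
The $\mathcal{H}_{\mathrm{dec}}$-calibration condition applied to $h=\mathbf{1}_{R_a}$ yields $\mathbb{E}[\mathbf{1}_{R_a}(f(X))\,q(f(X))]=\mathbb{E}[\mathbf{1}_{R_a}(f(X))\,f(X)]$ for every $q\in\mathcal{Q}$, so the displayed expression reduces to $\mathbb{E}[u(a_{\mathrm{BR}}(f(X)),f(X))]$, independent of $q$.

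\textbf{Closing the minimax.} Since the identity map $q_{\mathrm{id}}(v)=v$ belongs to $\mathcal{Q}$, for any other measurable policy $a(\cdot)$,
\[
\min_{q\in\mathcal{Q}}\mathbb{E}[u(a(f(X)),q(f(X)))]\;\le\;\mathbb{E}[u(a(f(X)),f(X))]\;\le\;\mathbb{E}[u(a_{\mathrm{BR}}(f(X)),f(X))],
\]
where the last inequality is pointwise from the definition of $a_{\mathrm{BR}}$. Combined with the invariance step, the right-hand side equals $\min_{q\in\mathcal{Q}}\mathbb{E}[u(a_{\mathrm{BR}}(f(X)),q(f(X)))]$, which establishes minimax optimality of $a_{\mathrm{BR}}$.

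\textbf{Where the care goes.} The routine calculations are trivial; the conceptual subtlety is just that the indicators $\mathbf{1}_{R_a}$ are \emph{exactly} the test functions needed to convert the calibration promise into the invariance of $a_{\mathrm{BR}}$'s expected utility over $\mathcal{Q}$, which is why this particular test class is the right one. I also want to remark that the same conclusion can be reached via Theorem~\ref{thm:finiteH} by exhibiting dual multipliers $\lambda_a^\star=-w_a$: then for $v\in R_a$ and any $p\in[0,1]^d$, $\mathrm{val}(p)+p\cdot\lambda_a^\star=\max_{a'}u(a',p)-\langle w_a,p\rangle\ge c_a$, with equality at $p=v$, so $v$ itself realizes the pointwise minimum and $q^\star(v)=v$ almost everywhere. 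The main obstacle, if any, is the measure-theoretic bookkeeping for infinite $\mathcal{A}$ and for ties on region boundaries, which is handled by a measurable selection of $a_{\mathrm{BR}}$ and by noting that ties do not affect the relevant expectations.
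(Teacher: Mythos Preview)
Your proof is correct. Your main argument---invariance of $\mathbb{E}[u(a_{\mathrm{BR}}(f(X)),q(f(X)))]$ over $q\in\mathcal{Q}$ combined with the feasibility of $q_{\mathrm{id}}$---is in fact \emph{exactly} the argument the paper uses to prove the more general Theorem~\ref{thm:dec-calib-contained} (plug-in optimality whenever $\mathcal{H}\supseteq\mathcal{H}_{\mathrm{dec}}$). For Theorem~\ref{thm:decision-calibration} itself, however, the paper takes a different route: it works on the reduced adversarial problem $\min_{q\in\mathcal{Q}}\mathbb{E}[\mathrm{val}(q(f(X)))]$, applies Jensen (convexity of $\mathrm{val}$) conditionally on each region $R_a$ to lower-bound the objective by $\sum_a\mathbb{P}(f(X)\in R_a)\,\mathrm{val}(\mu_a)$ with $\mu_a=\mathbb{E}[f(X)\mid f(X)\in R_a]$, and then exhibits the piecewise-constant $\bar q(v)=\sum_a\mu_a\mathbf{1}_{R_a}(v)$ as a feasible attainer. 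Since $\mu_a\in R_a$, the robust action on $R_a$ is $a$, matching $a_{\mathrm{BR}}$. Your approach is shorter and immediately yields the extension to larger $\mathcal{H}$; the paper's approach for Theorem~\ref{thm:decision-calibration} additionally identifies the worst-case belief $q^\star$ explicitly, which is structurally informative but not needed for the optimality claim alone.

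Two small remarks on your dual aside. First, with $\lambda_a^\star=-w_a$ the pointwise minimizer of $\mathrm{val}(p)-\langle w_a,p\rangle$ is not just $p=v$ but the entire region $R_a$ (the expression equals $c_a$ there and exceeds $c_a$ elsewhere); so $q^\star(v)=v$ is \emph{a} valid selection rather than the unique one, though this does not affect $a_{\mathrm{robust}}$. Second, you did not verify that this $\lambda^\star$ actually maximizes the dual; it does, because on $R_a$ one has $\mathrm{val}(f(X))=u(a,f(X))$, which is linear, so $\mathbb{E}[\mathrm{val}(f(X))\mid f(X)\in R_a]=\mathrm{val}(\mu_a)$ and the dual value $G(\lambda^\star)=\mathbb{E}[\mathrm{val}(f(X))]$ matches the primal optimum $\sum_a\mathbb{P}(f(X)\in R_a)\,\mathrm{val}(\mu_a)$.
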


 Put differently, upon observing a forecast $v=f(x)$, the decision-maker need only best respond to $v$; no adversarial “tilt’’ survives the decision-calibration constraints. Conceptually, this upgrades the previously known guarantees of decision calibration---that it implies no swap regret \citep{noarov2023high}---to \emph{minimax optimality}. Swap regret guarantees do not preclude the existence of a policy $a:[0,1]^d\rightarrow \mathcal{A}$ that dominates the plugin best response policy $a_{\rm BR}$ --- only that no improved policy has the form $a(v) = \phi(a_{\rm BR}(v))$ for some mapping $\phi:\mathcal{A}\rightarrow \mathcal{A}$, using ``actions as a bottleneck''. In contrast, Theorem \ref{thm:decision-calibration} directly establishes that no other policy $a:[0,1]^d\rightarrow \mathcal{A}$ can improve on the plugin policy $a_{\rm BR}$ in our minimax sense.

The preceding result assumes that the information conveyed by the forecaster to the decision-maker is exhausted by the decision-calibration tests $\{\mathbf{1}_{R_a}\}_{a\in\mathcal{A}}$. In practice, a forecaster might satisfy additional calibration equalities,
\[
\mathbb{E}\!\big[\,h(f(X))\cdot\{Y-f(X)\}\,\big]=0,
\]
for functions $h$ beyond the indicators $\mathbf{1}_{R_a}$. The next theorem shows that the plug-in optimality conclusion is stable under such enrichments. This is intuitive: if a forecaster is trustworthy, then making it more calibrated (i.e., adding information) should not diminish that trustworthiness.

\begin{theorem}[Decision calibration is sufficient, and remains sufficient under richer tests]
\label{thm:dec-calib-contained}
Let $\mathcal{H}$ be any test class that contains the decision-calibration indicators, $\mathcal{H}_{\mathrm{dec}}=\{\mathbf{1}_{R_a}:a\in\mathcal{A}\}$. If $f$ is perfectly $\mathcal{H}$-calibrated, then the minimax-optimal robust rule in~\eqref{robust} coincides (a.e.) with the plug-in best response:
\[
a_{\mathrm{robust}}(v)\;\in\;\arg\max_{a\in\mathcal{A}} u(a,v)\qquad\text{for a.e.\ } v=f(x).
\]
\end{theorem}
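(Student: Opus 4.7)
The plan is to reuse Theorem~\ref{thm:decision-calibration} directly, by exhibiting the same saddle point used there and verifying it survives enlargement of the test class. Specifically, I will show that when $\mathcal{H}\supseteq\mathcal{H}_{\mathrm{dec}}$, the pair $(a_{\mathrm{BR}},q^\star)$ with $q^\star(v)=v$ remains a saddle point of the minimax problem in \eqref{robust} associated with the richer class $\mathcal{H}$.

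First I would observe the \emph{monotonicity} of the ambiguity set: adding test functions only tightens the defining linear moment equalities, so $\mathcal{H}\supseteq\mathcal{H}_{\mathrm{dec}}$ implies $\mathcal{Q}\subseteq\mathcal{Q}_{\mathrm{dec}}$. Next, the identity map $q^\star(v)=v$ trivially satisfies every $\mathcal{H}$-calibration equality because the integrand $h(f(X))\cdot(q^\star(f(X))-f(X))$ vanishes pointwise. Hence $q^\star\in\mathcal{Q}$ for any class $\mathcal{H}$ whatsoever, including infinite-dimensional ones; this already handles the worry that $\mathcal{H}$ need not satisfy the finite-dimensional hypothesis of Theorem~\ref{thm:finiteH}.

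The remaining work is to verify the two saddle point conditions at $(a_{\mathrm{BR}},q^\star)$ for the smaller set $\mathcal{Q}$. The inner maximization is immediate from the definition of best response: $a_{\mathrm{BR}}\in\arg\max_a \mathbb{E}[u(a(f(X)),q^\star(f(X)))]$, because $q^\star(f(X))=f(X)$ and $a_{\mathrm{BR}}(v)$ maximizes $u(\cdot,v)$ pointwise. For the inner minimization, I would invoke Theorem~\ref{thm:decision-calibration} on the decision-calibration class: that theorem establishes that $q^\star$ achieves $\min_{q\in\mathcal{Q}_{\mathrm{dec}}}\mathbb{E}[u(a_{\mathrm{BR}}(f(X)),q(f(X)))]$. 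Since $q^\star\in\mathcal{Q}\subseteq\mathcal{Q}_{\mathrm{dec}}$, a minimizer over a superset that lies in the subset is also a minimizer over the subset, so $q^\star$ minimizes over $\mathcal{Q}$ as well. Combining the two conditions, $(a_{\mathrm{BR}},q^\star)$ is a saddle point for the enriched problem, and hence $a_{\mathrm{robust}}(v)\in\arg\max_a u(a,v)$ for a.e.\ $v=f(x)$.

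I do not foresee a serious obstacle. The main conceptual content is the ``minimizer over a superset that lies in the subset is still a minimizer over the subset'' observation, combined with the trivial membership of the identity in $\mathcal{Q}$, which together let us piggyback on Theorem~\ref{thm:decision-calibration} without re-deriving a dual characterization on $\mathcal{H}$ or invoking any general minimax theorem (a saddle point is exhibited by hand). The only point I would want to be careful about is to ensure that Theorem~\ref{thm:decision-calibration}'s conclusion is used in its saddle-point form with worst case $q^\star=\mathrm{id}$, not merely as optimality of $a_{\mathrm{BR}}$ among policies; a direct inspection of its proof should confirm this, since Theorem~\ref{thm:finiteH} produces both components of the saddle point simultaneously.
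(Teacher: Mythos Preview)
Your high-level strategy---monotonicity $\mathcal{Q}\subseteq\mathcal{Q}_{\mathrm{dec}}$, membership of the identity map in every $\mathcal{Q}$, and transferring a saddle point from the smaller problem---is sound and slightly different in organization from the paper's direct argument. But there is a concrete misstep at the point you yourself flag as delicate. If you inspect the proof of Theorem~\ref{thm:decision-calibration}, the adversarial component of the saddle point is \emph{not} the identity: it is the piecewise-constant region-mean map $\bar q(v)=\sum_a \mu_a\,\mathbf 1_{R_a}(v)$ with $\mu_a=\mathbb{E}[f(X)\mid f(X)\in R_a]$. So ``Theorem~\ref{thm:decision-calibration} establishes that $q^\star=\mathrm{id}$ achieves the minimum over $\mathcal{Q}_{\mathrm{dec}}$'' is not something you can read off; and $\bar q$ itself will generally \emph{not} belong to the richer $\mathcal{Q}$ (for instance, it fails the self-test $h(v)=v$ unless $f$ is constant on each region), so you cannot substitute $\bar q$ for $\mathrm{id}$ in your transfer argument either.

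The fix is short, but it is exactly the ingredient the paper supplies in its own proof: by linearity of $u(a,\cdot)$ and the decision-calibration equalities, $\mathbb{E}[u(a_{\mathrm{BR}}(f(X)),q(f(X)))]$ is \emph{constant} over all $q\in\mathcal{Q}_{\mathrm{dec}}$ (the paper's ``BR-invariance'' identity). In particular $\mathrm{id}$ attains the minimum over $\mathcal{Q}_{\mathrm{dec}}$, and then your monotonicity step goes through verbatim. Once you add this one-line computation, your route and the paper's coincide in content: the paper proves BR-invariance directly for $\mathcal{Q}_{\mathcal H}$ and sandwiches with $\mathrm{id}\in\mathcal{Q}_{\mathcal H}$, while you factor through Theorem~\ref{thm:decision-calibration} and use monotonicity---but both hinge on the same linearity calculation you currently omit.
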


\begin{figure} 
  \centering
\includegraphics[width=.70\linewidth]{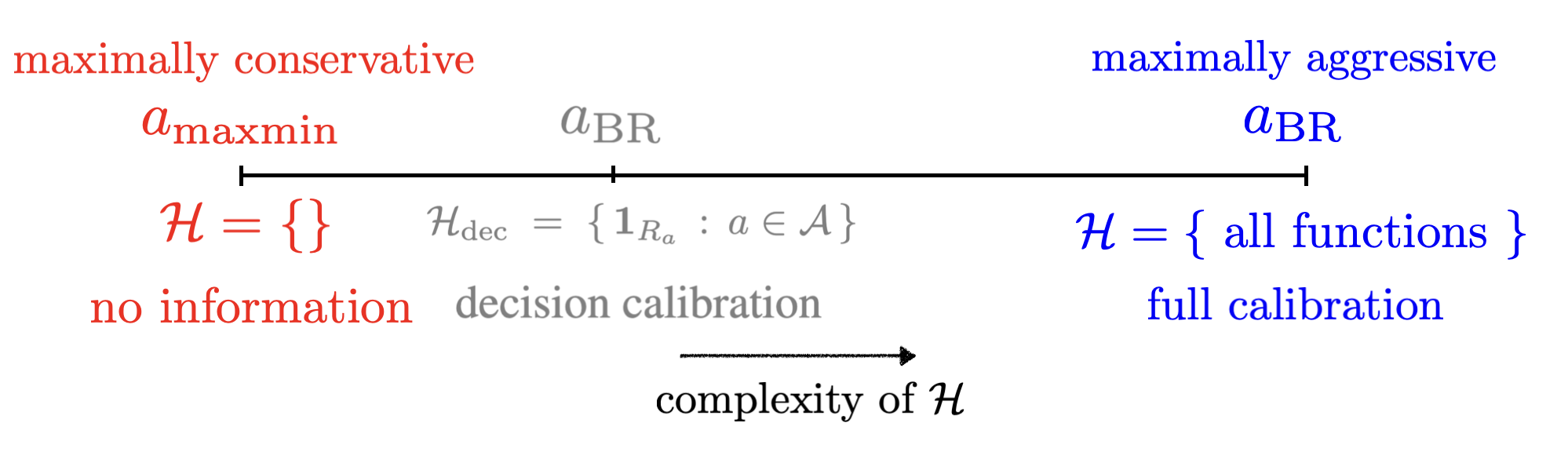}
  \caption{Schematic of the Sharp Transition}
  \label{fig:ch-b}
\end{figure}
\textbf{Sharp transition.}
At first glance, one might expect a \emph{gradual} deconservatization: as $\mathcal{H}$ is enriched with more tests, the robust policy from \eqref{robust} should steadily move from fully conservative toward plug-in best-response. Theorems~\ref{thm:decision-calibration}–\ref{thm:dec-calib-contained} reveal a sharper picture (see Figure \ref{fig:ch-b}). Once $\mathcal{H}$ contains the $|\mathcal{A}|$ decision tests $\{\mathbf{1}_{R_a}\}_{a\in\mathcal{A}}$, the adversarial tilt vanishes ($q^\star(\nu)=\nu$ a.e.) and the robust rule \emph{collapses} to the plug-in best response (given in \eqref{best_response}). Enlarging $\mathcal{H}$ beyond these indicators does not add conservatism: the minimax-optimal policy remains “trust the forecast and best respond.” 

\begin{tcolorbox}[colback=gray!6,colframe=gray!55!black,boxsep=1mm,
  left=2mm,right=2mm,top=1mm,bottom=1mm,sharp corners]
\centering
\emph{Decision calibration is a minimal, task-specific threshold at which robust decision making and plug-in best-response coincide, providing a crisp target for forecaster design and a clear requirement for downstream decision makers.}
\end{tcolorbox}


As a byproduct, this leads to another practical advantage of decision calibration: a single forecaster can be made simultaneously reliable for a \emph{collection} of downstream decision problems. Intuitively, if the forecast passes the decision calibration tests of each problem, then none of the decision makers needs additional robustness, the plug-in best-response is minimax-optimal for all of them.

\begin{corollary}[Simultaneous plug-in optimality across multiple decisions]
\label{cor:multi-decision}
Let $u_1,\ldots,u_m$ be $m$ decision problems, with respective action sets $\mathcal{A}_j$ and linear utilities $u_j(a,v)$ in $v\in[0,1]^d$. For each $j$ and $a\in\mathcal{A}_j$, let
\[
R_{a,j}\;=\;\{\,v\in[0,1]^d:\ u_j(a,v)\ge u_j(a',v)\ \text{for all }a'\in\mathcal{A}_j\,\}
\]
be the plug-in decision region of action $a$ in problem $j$, and define the combined test class
\[
\mathcal{H}_{\mathrm{dec}}^{\mathrm{all}}\;=\;\bigcup_{j=1}^m \big\{\,\mathbf{1}_{R_{a,j}}:\ a\in\mathcal{A}_j\,\big\}.
\]
If $f$ is $\mathcal{H}$-calibrated for some $\mathcal{H}$ satisfying $\mathcal{H}_{\mathrm{dec}}^{\mathrm{all}}\subseteq \mathcal{H}$, then for every $j\in\{1,\ldots,m\}$ the minimax-optimal robust policy for problem $j$ coincides (a.e.) with the plug-in best response:
\[
a_{\mathrm{robust},j}(v)\ \in\ \arg\max_{a\in\mathcal{A}_j} u_j(a,v)\qquad\text{for a.e.\ }v=f(x).
\]
\end{corollary}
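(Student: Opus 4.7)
The plan is to derive the corollary as a direct, per-problem application of Theorem~\ref{thm:dec-calib-contained}. The key observation is that the condition defining $\mathcal{H}$-calibration is purely a property of the forecaster $f$ and the test class $\mathcal{H}$; it does not depend on which downstream utility is being analyzed. Consequently, whether a given decision problem $j$ is eligible for the ``plug-in is minimax optimal'' conclusion of Theorem~\ref{thm:dec-calib-contained} depends only on whether $\mathcal{H}$ contains the decision-region indicators associated with $u_j$.

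Concretely, I would fix an arbitrary $j\in\{1,\ldots,m\}$. By the very definition of $\mathcal{H}_{\mathrm{dec}}^{\mathrm{all}}$, we have the chain of inclusions $\{\mathbf{1}_{R_{a,j}}:a\in\mathcal{A}_j\}\subseteq \mathcal{H}_{\mathrm{dec}}^{\mathrm{all}}\subseteq \mathcal{H}$, so $\mathcal{H}$ satisfies the hypothesis of Theorem~\ref{thm:dec-calib-contained} applied to the $j$th decision problem, i.e.\ with utility $u_j$, action set $\mathcal{A}_j$, and decision regions $R_{a,j}$. Since $f$ is by assumption $\mathcal{H}$-calibrated, Theorem~\ref{thm:dec-calib-contained} yields that the minimax-optimal robust rule for problem $j$ agrees, almost everywhere with respect to the distribution of $f(X)$, with $v\mapsto\arg\max_{a\in\mathcal{A}_j}u_j(a,v)$. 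Because $j$ was arbitrary, the conclusion holds for each of the $m$ problems individually; and since a finite union of measure-zero sets is measure zero, it in fact holds simultaneously across all $j$ on a single full-measure set.

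The hard part here, honestly, is essentially nothing: the corollary is a bookkeeping observation about how the sufficiency condition of Theorem~\ref{thm:dec-calib-contained} composes across multiple tasks. The only points worth checking carefully are that (i) the linearity-in-$v$ assumption on the utility (inherited from Theorem~\ref{thm:finiteH} through Theorem~\ref{thm:dec-calib-contained}) holds for each $u_j$, which is given by hypothesis, and (ii) the $\mathcal{H}$-calibration condition, being a property of $f$ and $\mathcal{H}$ alone, transfers verbatim to each decision problem without any reinterpretation of the tests. Given these, no new analytic work beyond the proof of Theorem~\ref{thm:dec-calib-contained} is required.
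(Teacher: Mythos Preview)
Your proposal is correct and matches the paper's own proof essentially verbatim: fix $j$, observe $\{\mathbf{1}_{R_{a,j}}:a\in\mathcal{A}_j\}\subseteq \mathcal{H}_{\mathrm{dec}}^{\mathrm{all}}\subseteq \mathcal{H}$, and invoke Theorem~\ref{thm:dec-calib-contained} problem by problem. The only addition you make beyond the paper is the remark about the finite union of null sets, which is harmless and arguably a nice touch.
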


\noindent\emph{Proof.}
For each problem $j$, the included indicators $\{\mathbf{1}_{R_{a,j}}\}_{a\in\mathcal{A}_j}$ ensure that $\mathcal{H}$ contains the decision-calibration tests of problem $j$. Theorem~\ref{thm:dec-calib-contained} then applies verbatim to each $j$, yielding plug-in optimality problem by problem.

\subsection{Beyond Decision Calibration: Generic $\mathcal{H}$-Specifications from Training Pipelines}

Thus far we have focused on \emph{decision calibration}, which, when attainable, collapses $a_{\rm robust}$ to the plug-in best response. In practice, two regimes arise. (i) If one can influence the forecaster’s training pipeline, decision calibration is the natural target: it is minimal and task-specific, and our sharp-transition results guarantee plug-in minimax optimality. (ii) If one \emph{cannot} control training, the forecaster might not be decision calibrated for the downstream task. Identifying its precise partial-calibration profile may be difficult, yet certain moment conditions arise \emph{structurally} from standard training procedures. Here we give two examples of how to leverage such “free” structure to specify usable $\mathcal{H}$’s and derive the associated robust policies.

\textbf{Self-orthogonality from squared-loss training.}
A ubiquitous example is \emph{self-orthogonality} (a form of self-calibration) that follows from first-order optimality when a model with a linear last layer is trained to minimize mean squared error. This includes the universally adopted cases of regression with either a linear model or a neural network with a linear head, trained by mean squared error. This and similar guarantees for other loss functions have previously been investigated as consequences of \emph{low degree multicalibration} \citep{gopalan2022low}. 

\begin{proposition}[Self-orthogonality under squared loss]\label{prop:self-orth}
Let $X\mapsto z_\phi(X)\in\mathbb{R}^k$ be a representation and $f_\theta(X)=W z_\phi(X)\in\mathbb{R}^d$ a linear last layer. Suppose $\theta=(\phi,W)$ is trained to a first-order stationary point of the expected squared loss
\[
\mathcal{L}(\theta)\;=\;\tfrac12\,\mathbb{E}\!\left[\big\|f_\theta(X)-Y\big\|_2^2\right].
\]
Then the following calibration moments hold:
\[
\mathbb{E}\!\big[z_\phi(X)\,(Y-f_\theta(X))^\top\big]=0
\quad\text{and}\quad
\mathbb{E}\!\big[f_\theta(X)\,(Y-f_\theta(X))^\top\big]=0.
\]
In particular, $f_\theta$ is $\mathcal{H}$-calibrated for the test class $\mathcal{H}=\{h_j(v)=e_j^\top v:\ j=1,\dots,d\}$ (and for any linear combination thereof).
\end{proposition}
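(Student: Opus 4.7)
}
The plan is to exploit first-order optimality in the last-layer weights $W$ and then to ``lift'' the resulting moment condition through the linear head $f_\theta(X)=Wz_\phi(X)$ to obtain the second identity. The $\mathcal{H}$-calibration statement will then follow by unpacking the definition with $h_j(v)=e_j^\top v$.

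First I would compute $\nabla_W \mathcal{L}(\theta)$. Since $f_\theta(X)=Wz_\phi(X)$, differentiating inside the expectation (standard dominated-convergence conditions are fine since $X,Y$ are bounded and $W$ enters linearly) gives
\[
\nabla_W \mathcal{L}(\theta)\;=\;\mathbb{E}\!\left[(f_\theta(X)-Y)\,z_\phi(X)^\top\right].
\]
First-order stationarity in $W$ forces this $d\times k$ matrix to vanish, which, after transposing signs, is exactly $\mathbb{E}[z_\phi(X)(Y-f_\theta(X))^\top]=0$. This is the first claimed moment. Note that only stationarity in $W$ is used; we do not require anything about the representation parameters $\phi$.

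Next I would obtain the second identity by left-multiplying the first by $W$:
\[
\mathbb{E}\!\left[W z_\phi(X)(Y-f_\theta(X))^\top\right]
\;=\;\mathbb{E}\!\left[f_\theta(X)(Y-f_\theta(X))^\top\right]\;=\;0,
\]
where the first equality uses $Wz_\phi(X)=f_\theta(X)$ and the linearity of expectation (so $W$ passes through). This derivation is the one nontrivial conceptual step: the second moment is not a separate stationarity condition but a direct algebraic consequence of the linear-head structure combined with the first.

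Finally, to verify $\mathcal{H}$-calibration for $\mathcal{H}=\{h_j(v)=e_j^\top v\}_{j=1}^d$, I would note that $h_j(f_\theta(X))=(f_\theta(X))_j$, so
\[
\mathbb{E}\!\left[h_j(f_\theta(X))\,(Y-f_\theta(X))\right]
\;=\;\mathbb{E}\!\left[(f_\theta(X))_j\,(Y-f_\theta(X))\right],
\]
which is precisely the $j$-th row of the zero matrix $\mathbb{E}[f_\theta(X)(Y-f_\theta(X))^\top]$ (viewed as a column vector). Extension to arbitrary linear combinations of the $h_j$'s is immediate from linearity of expectation. Since both moment conditions are clean consequences of a one-line gradient calculation, I do not anticipate any substantive obstacle; the only point worth stating carefully is that stationarity in $W$ alone suffices, so the proposition applies even when $\phi$ is only trained to a saddle or when early stopping leaves the representation non-stationary, as long as the final linear head has been fit to optimality.
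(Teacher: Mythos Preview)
Your proposal is correct and follows essentially the same approach as the paper: compute $\nabla_W\mathcal L$, use stationarity in $W$ to obtain the first moment identity, left-multiply by $W$ to get the second, and then read off $\mathcal H$-calibration row by row. Your additional remark that only stationarity in $W$ (not in $\phi$) is needed is a nice sharpening that the paper's proof also implicitly relies on but does not state explicitly.
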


\textbf{Implications.}
Proposition~\ref{prop:self-orth} supplies a generic, pipeline-induced $\mathcal{H}$-calibration guarantee whenever a linear head is trained to stationarity under squared loss. Specializing Theorem~\ref{thm:finiteH} to this case yields a particularly simple dual. For $d=1$ (for example the case of 1-dimensional regression) and $\mathcal{H}=\{h(v)=v\}$, the multiplier is a scalar $\lambda$, and for each forecast $\nu=f(x)$ the worst-case distribution becomes
\[
q^\star(\nu)\;\in\;\arg\min_{p\in[0,1]}\Big\{\mathrm{val}(p)+\lambda\,\nu\,p\Big\},
\qquad
\mathrm{val}(p)=\max_{a\in\mathcal{A}}\{\,u(a,p)\,\}.
\]
The robust action is the best response to $q^\star(\nu)$:
\[
a_{\mathrm{robust}}(\nu)\in\arg\max_{a\in\mathcal{A}}u\big(a,q^\star(\nu)\big).
\]
When $u(a,p)$ is linear in $p$ and $\mathcal{A}$ is finite, $\mathrm{val}$ is convex piecewise linear in $p$, so the inner minimization reduces to evaluating a finite set of candidate points (the endpoints and pairwise breakpoints of $\mathrm{val}$). The dual objective
\[
G(\lambda)\;=\;\mathbb{E}\!\big[\min_{p\in[0,1]}\{\mathrm{val}(p)+\lambda\,f(X)\,p\}\big]\;-\;\lambda\,\mathbb{E}\!\big[f(X)^2\big]
\]
is concave in $\lambda$ and can be maximized by standard one-dimensional methods with provable guarantees (e.g., bisection on a monotone subgradient). In higher dimensions ($d>1$) the correction term, $\lambda\,\nu\,p$ becomes $\Lambda\, \nu\,p$ for a matrix of multipliers $\Lambda$, and the pointwise inner problem remains a small convex program over $p\in[0,1]^d$; for finite $\mathcal{A}$ and linear utilities, it is again efficiently solvable.

\textbf{Zero-bias and bin-wise calibration.}
A widely available source of partial calibration comes from \emph{post-hoc recalibration} that many practitioners already apply (mean correction, histogram binning, isotonic-style step fits on a held-out split). These procedures enforce generic (not task-specific) moment constraints that are directly usable in our framework. 
We focus on \emph{bin-wise} calibration: take a partition of the forecast range into bins $\{B_1,\ldots,B_J\}$ and enforce, for each bin,
\[
\mathbb{E}\!\Big[\mathbf{1}_{\{f(X)\in B_j\}}\,(Y - f(X))\Big]=0,
\qquad j=1,\ldots,J.
\]
This corresponds to the test class
\(
\mathcal H_{\mathrm{bin}}=\{\mathbf{1}_{B_j}: j=1,\ldots,J\},
\)
and reduces to zero-bias when $J{=}1$ with $B_1=[0,1]^d$.

\begin{proposition}[Robust policy under bin-wise calibration]\label{prop:bin-robust}
Let the utility be linear in the outcome and the action set $\mathcal A$ be finite. 
If $f$ is $\mathcal H_{\mathrm{bin}}$-calibrated, then with
\[
m_j\;:=\;\mathbb{E}\!\left[f(X)\,\middle|\, f(X)\in B_j\right]
\;=\;
\mathbb{E}\!\left[Y\,\middle|\, f(X)\in B_j\right],
\]
the worst-case belief is piecewise constant
\[
q^\star(v)=m_j\quad\text{for }v\in B_j\ \text{(a.e.)},
\]
and the robust action best-responds to the bin mean:
\[
a_{\mathrm{robust}}(v)\in\arg\max_{a\in\mathcal A}\big\{u(a,m_j)\big\}
\qquad\text{for }v\in B_j\ \text{(a.e.)}.
\]
\end{proposition}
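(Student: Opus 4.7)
The plan is to specialize Theorem~\ref{thm:finiteH} to the bin-indicator test class $\mathcal{H}_{\mathrm{bin}}=\{\mathbf{1}_{B_j}:j=1,\ldots,J\}$ and exploit the disjointness of the partition to reduce all calibration constraints to a single linear equality per bin.

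First I would collapse the dual tilt bin by bin. Since the $\mathbf{1}_{B_j}$ are disjointly supported, for any $v\in B_j$ only the $j$-th indicator is nonzero, so the sum $\sum_{i=1}^J h_i(v)\,\lambda_i^\star$ appearing in Theorem~\ref{thm:finiteH} collapses to the single multiplier $\lambda_j^\star\in\mathbb{R}^d$. The saddle-point characterization therefore reads
\[
q^\star(v)\in S_j:=\arg\min_{p\in[0,1]^d}\big\{\mathrm{val}(p)+p\cdot\lambda_j^\star\big\}\qquad\text{for a.e.\ }v\in B_j,
\]
and $S_j$ is independent of $v$ within the bin. Because the utility is linear in its second argument, $\mathrm{val}(p)=\max_{a}u(a,p)$ is convex, hence the inner objective is convex and $S_j$ is a convex subset of $[0,1]^d$. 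I would then invoke primal feasibility $q^\star\in\mathcal{Q}$: the $j$-th moment equality specializes, on any bin of positive probability, to $\mathbb{E}[q^\star(f(X))\mid f(X)\in B_j]=m_j$. Since $q^\star(f(X))\in S_j$ almost surely conditional on $B_j$, its conditional mean lies in the convex hull of $S_j$, which coincides with $S_j$. Hence $m_j\in S_j$, and the piecewise-constant selection $q^\star(v)=m_j$ for $v\in B_j$ is simultaneously a member of the argmin and satisfies the bin equality, so it is a valid worst-case belief. Applying Theorem~\ref{thm:finiteH} once more yields $a_{\mathrm{robust}}(v)\in\arg\max_{a\in\mathcal{A}} u(a,m_j)$ on $B_j$. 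The equivalent expression $m_j=\mathbb{E}[Y\mid f(X)\in B_j]$ then follows from $\mathcal{H}_{\mathrm{bin}}$-calibration of $f$ applied to $\mathbf{1}_{B_j}$, which gives $\mathbb{E}[\mathbf{1}_{B_j}(f(X))\,Y]=\mathbb{E}[\mathbf{1}_{B_j}(f(X))\,f(X)]$.

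The delicate step is the claim that the specific selection $q^\star\equiv m_j$ on $B_j$ actually belongs to $S_j$, rather than only being realized in an averaged sense by some nonconstant measurable selection supported on $S_j$. This is resolved by the convexity of $S_j$, itself a consequence of the blanket linearity of $u(a,\cdot)$ assumed in Section~\ref{sec:formulation}. No further regularity on the partition, the action set, or the distribution of $f(X)$ is required, and the argument is entirely a one-line specialization of the abstract saddle-point formula once the bin-wise collapse of the dual tilt is observed.
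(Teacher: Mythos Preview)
Your proof is correct but proceeds differently from the paper. The paper works entirely on the primal side: starting from the reduction $\max_{a}\min_{q\in\mathcal Q}=\min_{q\in\mathcal Q}\mathbb E[\mathrm{val}(q(f(X)))]$, it applies Jensen's inequality bin by bin (convexity of $\mathrm{val}$) to lower-bound the adversarial objective by $\sum_j\mathbb P(E_j)\,\mathrm{val}(m_j)$, and then exhibits the piecewise-constant $\bar q(v)=m_j$ on $B_j$ as a feasible map attaining that bound. Your argument instead goes through the dual of Theorem~\ref{thm:finiteH}: you observe that disjointness of the partition collapses the tilt to a single $\lambda_j^\star$ on $B_j$, so the pointwise argmin set $S_j$ is constant across the bin; convexity of $S_j$ (as the argmin of a convex function on $[0,1]^d$) together with the primal feasibility equality $\mathbb E[q^\star\mid B_j]=m_j$ forces $m_j\in S_j$, whence the bin-constant selection is itself a Lagrangian minimizer and a feasible primal optimizer. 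The paper's route is slightly more self-contained---it never needs to look at the multipliers $\lambda^\star$ or argue about the structure of $S_j$---while your route is a clean illustration of how the abstract saddle-point formula specializes, and makes transparent \emph{why} the bin-constant map is optimal (it is the unique feasible point of the already-constant pointwise argmin). Both arguments hinge on the same convexity of $\mathrm{val}$; yours uses it to show $S_j$ is convex, the paper's to invoke Jensen.
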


\textbf{Implications.}
Bin-wise calibration $\mathcal{H}_{\mathrm{bin}}$ can be obtained cheaply via standard post-hoc methods (histogram binning or isotonic regression), and Proposition~\ref{prop:bin-robust} yields an especially simple, closed-form characterization of the robust policy. Computing $a_{\mathrm{robust}}$ reduces to: (i) estimating $m_j$ on a calibration split, and (ii) at test time, mapping $v$ to its bin $B_j$ and best-responding to $m_j$. No additional optimization is needed to compute actions. As a special case, when $J=1$ we recover the global-mean constraint $\mathbb{E}[Y-f(X)]=0$. Then $q^\star$ is constant, $q^\star(v)\equiv \bar m$, with $\bar m=\mathbb{E}[f(X)]=\mathbb{E}[Y]$, and the robust rule ignores $v$ and plays $\arg\max_{a\in\mathcal A} u(a,\bar m)$. As the partition is refined, the robust rule moves from a single global plug-in best response at $\bar m$ to a piecewise plug-in best response at ${m_j}$, yielding a richer, finer-grained decision policy.

\section{Experiments}
\label{sec:experiments}

In this section, we empirically evaluate the validity and practical consequences of our framework by implementing our methods on two real-world datasets. We compare the \emph{plug-in best response} (\(a_{\mathrm{BR}}\)), a standard and widely used baseline, against the \emph{robust policy} (\(a_{\mathrm{robust}}\)), which enjoys minimax optimality guarantees under \(\mathcal{H}\)-calibration.

We focus on two classes of metrics. \emph{Nominal performance} measures average utility when the test data are i.i.d.\ from the same distribution as the training and calibration splits; this reflects an optimistic regime that often degrades in practice. \emph{Adversarial performance} probes the other extreme by altering the test-time outcome distribution in two ways: (i) a worst case tailored to the plug-in policy, and (ii) a worst case induced by the robust dual, tailored to the robust policy. In both cases, the adversarial distributions respect the \(\mathcal{H}\)-calibration constraints and are therefore indistinguishable, from the decision-maker’s perspective, from i.i.d.\ test draws given an \(\mathcal{H}\)-calibrated forecaster. 

Our theory predicts that two patterns should emerge. First, by minimax optimality, the robust policy should dominate the plug-in rule when each is evaluated against its \emph{own} worst-case distribution (and typically dominate the plug-in rule under the adversary tuned to hurt the plug-in). Second, because \((a_{\mathrm{robust}},q^\star)\) forms a saddle point of~\eqref{robust}, when both policies are evaluated under the robust-tuned adversary, the robust policy should not underperform the plug-in rule. Under nominal i.i.d.\ evaluation, the plug-in rule may achieve higher utility, reflecting the lack of need for conservatism in that regime.

Our code is available at the following link for reproducibility purposes: \url{https://github.com/shayankiyani98/robust-decision-partial-calibration}.

\subsection{Case Studies: Bike Sharing and California Housing}
We evaluate our framework on two real-world regression datasets with distinct decision-making interpretations.  

\paragraph{Bike Sharing (UCI).}  
The UCI \emph{Bike Sharing} (daily) dataset \cite{fanaee2014event} records daily rider counts alongside calendar and weather covariates (season, month, weekday, holiday, working day, weather state, temperature, humidity, wind). The outcome $Y\in[0,1]$ is the rescaled total rider count, and the decision-maker chooses a staffing/capacity multiplier from
$
\mathcal{A}=\{0.8,\,1.0,\,1.2\},
$
interpretable as conservative, nominal, and aggressive provisioning.  

\paragraph{California Housing.}  
The \emph{California Housing} dataset \cite{pace1997sparse} records median house values (rescaled to $[0,1]$) with demographic and geographic covariates (median income, housing age, population, latitude/longitude, etc.). Here the decision-maker chooses an investment multiplier from
$
\mathcal{A}=\{0.6,\,0.75,\,0.90\},
$
interpretable as conservative, nominal, and aggressive investment.  

\paragraph{Utility specification.}  
In both settings we adopt the utility function
\[
u(a,y)=\alpha\,a\,y-\,C(a),
\]
which is linear in $y$. The benefit term $\alpha\,a\,y$ captures service or return proportional to realized outcome $y$, scaled by $\alpha>0$. The cost term $C(a)$ grows in $a$, penalizing aggressive choices via over-provisioning costs or investment risk. This form tunes the under/over-trade-off without departing from linearity. For Bike Sharing we use $(\alpha,C(\cdot))=(0.9,\{0.02,0.05,0.1\})$, while for California Housing we use $(\alpha,C(\cdot))=(0.9,\{0.02,0.05,0.20\})$. The qualitative conclusions of this Section remain the same under other reasonable parameter choices.  

\paragraph{Forecasting model.}  
In both datasets, the forecaster $f$ is a two-layer MLP regressor trained to optimize mean squared error. By the self-orthogonality property of linear heads under squared loss (Proposition \ref{prop:self-orth}), the learned forecaster approximately satisfies $\mathcal{H}$-calibration with $\mathcal{H}=\{h(v)=v\}$, which is the calibration constraint used to derive the robust policy $a_{\mathrm{robust}}$. All experiments use an i.i.d.\ train/calibration/test split (60/20/20). We use the calibration data to substitute any population level expectation that is needed to be computed to derive $a_{\rm robust}$.  

\paragraph{Results.}  
Table~\ref{tab:exp-results} reports the mean utilities. The results match theory: under adversaries tailored to the robust policy, the robust rule achieves at least the plug-in performance; under adversaries tuned to harm the plug-in rule, the robust policy secures noticeably higher utility, reflecting its minimax protection. Moreover, the robust policy outperforms the plug-in best response when each is evaluated against its own worst-case distribution.

\section{Conclusion and Limitations}

We developed a decision-theoretic framework for acting on partially calibrated forecasts via a minimax-optimal robust policy over $\mathcal{H}$-calibrated forecasters. We then identified a sharp transition in the behavior of these policies: for any decision problem with $m$ actions, there exist $m$ decision tests (the decision-calibration class) such that, once they are included in $\mathcal{H}$, the robust policy \emph{collapses} to the plug-in best response. This spotlights decision calibration as a natural requirement whenever the decision-maker can influence the training pipeline. Moreover, even when decision calibration is unavailable, we showed that generic properties induced by standard training and post hoc procedures (e.g., self-orthogonality under squared loss and bin-wise calibration) yield usable test classes $\mathcal{H}$ and tractable robust policies within our framework.

Our model assumed that downstream decision makers were risk neutral --- i.e. that their utility functions $u(a,v)$ are linear in $v$ and that $\mathcal{A}$ is finite; these are standard assumptions in the calibration literature, but broadening them would be interesting. We note that certain classes of non-linear utility functions can be linearized over an appropriate basis \citep{gopalan2024omnipredictors,LuRS25}, which would allow our results to apply --- but theses bases are not always sufficiently low dimensional to be practical.

\begin{table}[t]
\centering
\caption{Mean utility on the test set under natural i.i.d.\ evaluation and two adversarial evaluations. Adversaries respect $\mathcal{H}$-calibration ($\mathcal{H}=\{h(v)=v\}$).}
\label{tab:exp-results}
\begin{tabular}{lcccccc}
\toprule
\multirow{2}{*}{Dataset} 
& \multicolumn{2}{c}{i.i.d.} 
& \multicolumn{2}{c}{Worst-case for robust} 
& \multicolumn{2}{c}{Worst-case for plug-in} \\
\cmidrule(lr){2-3}\cmidrule(lr){4-5}\cmidrule(lr){6-7}
& Plug-in & Robust 
& Plug-in & Robust 
& Plug-in & Robust \\
\midrule
Bike Sharing (UCI) 
& 0.474 & 0.463 
& 0.402 & 0.410 
& 0.393 & 0.412 \\
\addlinespace
California Housing
& 0.216 & 0.207 
& 0.160 & 0.164 
& 0.155 & 0.166 \\
\bottomrule
\end{tabular}
\end{table}

{\small
\setlength{\bibsep}{0.2pt plus 0.3ex}
\bibliographystyle{apalike}
\bibliography{references}
}
\newpage

\appendix
\part*{Appendix}
\section{Missing Proofs from the Main Body}
\textbf{Proof of Theorem \ref{thm:finiteH}}
\begin{proof}
We begin from the robust formulation
\begin{equation}\label{eq:robust-again}
\max_{a(\cdot):\mathcal{X}\to\mathcal{A}}\;\min_{q\in\mathcal{Q}}\;
\mathbb{E}\big[u\big(a(f(X)),\,q(f(X))\big)\big],
\end{equation}
where $\mathcal{A}\subset\mathbb{R}^m$ is compact, $u(\cdot,\cdot)$ is linear in its second component, $\mathcal{Q}$ is the nonempty, convex, and compact set of measurable maps $q:[0,1]^d\to[0,1]^d$ satisfying the linear moment equalities in~\eqref{Q-def}, and $a(\cdot)$ ranges over measurable policies with values in $\mathcal{A}$. The mapping $(a,q)\mapsto \mathbb{E}[u(a(f(X)),q(f(X)))]$ is convex in $q$ (since $u(a,\cdot)$ is linear, hence convex, in $y$ and expectation preserves convexity), concave in $a$ (as a pointwise maximum over linear functionals in $a$ on the compact set $\mathcal{A}$). Hence, by Sion’s minimax theorem,
\[
\max_{a(\cdot)}\;\min_{q\in\mathcal{Q}}\;
\mathbb{E}\big[u(a(f(X)),q(f(X)))\big]
\;=\;
\min_{q\in\mathcal{Q}}\;\max_{a(\cdot)}
\mathbb{E}\big[u(a(f(X)),q(f(X)))\big].
\]
Fix any $q\in\mathcal{Q}$. The inner maximization over policies separates pointwise in $v=f(x)$, yielding the value function
\[
\mathrm{val}(p)\;\triangleq\;\max_{a\in\mathcal{A}}u(a,p)
\quad\text{and}\quad
\max_{a(\cdot)}\,\mathbb{E}\!\left[u\big(a(f(X)),q(f(X))\big)\right]
=\mathbb{E}\!\left[\mathrm{val}\big(q(f(X))\big)\right].
\]
Therefore the robust value equals the convex adversarial problem
\begin{equation}\label{eq:adversary}
\min_{q\in\mathcal{Q}}\;\mathbb{E}\!\left[\mathrm{val}\big(q(f(X))\big)\right],
\end{equation}
which will be analyzed via Lagrangian duality below.

Introduce vector Lagrange multipliers $\lambda_i\in\mathbb{R}^d$ for the $d$-dimensional equalities in~\eqref{Q-def}, and let $\lambda=(\lambda_1,\ldots,\lambda_k)$. Define
\[
s(v)\;\triangleq\;\sum_{i=1}^k h_i(v)\,\lambda_i \;\in\; \mathbb{R}^d,\qquad v\in[0,1]^d.
\]
The Lagrangian of~\eqref{eq:adversary} is
\[
L(q,\lambda)
\;=\;
\mathbb{E}\!\left[\mathrm{val}\!\big(q(f(X))\big)\right]
\;+\;
\sum_{i=1}^k \lambda_i\cdot
\mathbb{E}\!\left[h_i\!\big(f(X)\big)\,\big(q(f(X)) - f(X)\big)\right].
\]
By linearity of expectation,
\[
L(q,\lambda)
\;=\;
\mathbb{E}\!\Big[\mathrm{val}\!\big(q(f(X))\big)
\;+\; q(f(X))\cdot s\!\big(f(X)\big)
\;-\; f(X)\cdot s\!\big(f(X)\big)\Big].
\]
The dual function is obtained by minimizing $L(q,\lambda)$ over measurable $q:[0,1]^d\to[0,1]^d$. Since the integrand depends on $q$ only through $q(f(X))$, the infimum can be taken \emph{pointwise} in the forecast value $v=f(X)$:
\[
G(\lambda)
\;=\;
\inf_{q} L(q,\lambda)
\;=\;
\mathbb{E}\!\Big[\;\inf_{p\in[0,1]^d}\big\{\mathrm{val}(p)+p\cdot s\!\big(f(X)\big)\big\}\;\Big]
\;-\;
\mathbb{E}\!\big[f(X)\cdot s\!\big(f(X)\big)\big].
\]
The primal problem~\eqref{eq:adversary} is convex (convex objective, affine constraints) and feasible (e.g., $q(v)=v$), thereby strong duality holds. Hence,
\[
\min_{q\in\mathcal{Q}} \mathbb{E}\!\left[\mathrm{val}\!\big(q(f(X))\big)\right]
\;=\;
\max_{\lambda\in(\mathbb{R}^d)^k} G(\lambda),
\]
and there exists a maximizing multiplier $\lambda^\star$. Define
\[
s^\star(v)\;\triangleq\;\sum_{i=1}^k h_i(v)\,\lambda_i^\star\in\mathbb{R}^d.
\]
By the definition of $G(\lambda)$ and strong duality, any primal optimizer $q^\star\in\mathcal{Q}$ must minimize the Lagrangian at $\lambda^\star$. Since the dependence on $q$ is only through $q(f(X))$, this yields the pointwise characterization, for $v=f(x)$ almost surely,
\[
q^\star(v)\;\in\;\arg\min_{p\in[0,1]^d}\Big\{\mathrm{val}(p)\,+\,p\cdot s^\star(v)\Big\}.
\]

With $q^\star$ fixed, define the policy
\[
a_{\mathrm{robust}}(v)\;\in\;\arg\max_{a\in\mathcal{A}}\,u\big(a,q^\star(v)\big).
\]
Then, by the definition of $\mathrm{val}$ and the construction of $q^\star$,
\[
\max_{a(\cdot)}\,\mathbb{E}\!\left[u\big(a(f(X)),q^\star(f(X))\big)\right]
=\mathbb{E}\!\left[\mathrm{val}\big(q^\star(f(X))\big)\right]
=\min_{q\in\mathcal{Q}}\mathbb{E}\!\left[\mathrm{val}\big(q(f(X))\big)\right],
\]
which shows that $(a_{\mathrm{robust}},q^\star)$ is a saddle point of \eqref{eq:robust-again}. In particular, $a_{\mathrm{robust}}$ is optimal for the outer maximization, and $q^\star$ is worst–case optimal for the inner minimization, with $q^\star$ characterized pointwise by the minimization problem above and determined by the dual multiplier $\lambda^\star$. This matches the statement of Theorem~\ref{thm:finiteH} and completes the proof.
\end{proof}

\textbf{Proof of Theorem \ref{thm:decision-calibration}}:

\begin{proof}
We use the reduction
\[
\max_{a(\cdot)}\min_{q\in\mathcal{Q}}\mathbb{E}\big[u(a(f(X)),q(f(X)))\big]
\;=\;
\min_{q\in\mathcal{Q}}\mathbb{E}\big[\mathrm{val}(q(f(X)))\big],
\]
established in the proof of Theorem~\ref{thm:finiteH}. Fix the decision regions
\[
R_a=\{\,v\in[0,1]^d:\ u(a,v)\ge u(a',v)\ \forall\,a'\in\mathcal{A}\,\},
\]
each convex. Under $\mathcal{H}_{\mathrm{dec}}=\{\mathbf{1}_{R_a}:a\in\mathcal{A}\}$, admissible $q$ satisfy
\[
\mathbb{E}\big[\mathbf{1}_{R_a}(f(X))\{q(f(X))-f(X)\}\big]=0\quad\forall a,
\]
equivalently (whenever $\mathbb{P}(f(X)\in R_a)>0$),
\[
\mathbb{E}\!\left[q(f(X))\,\middle|\,f(X)\in R_a\right]
=\mathbb{E}\!\left[f(X)\,\middle|\,f(X)\in R_a\right]=:\mu_a\in R_a.
\]
By Jensen’s inequality (convexity of $\mathrm{val}$), for any $q\in\mathcal{Q}$ and any $a$,
\[
\mathbb{E}\!\left[\mathrm{val}\big(q(f(X))\big)\,\middle|\,f(X)\in R_a\right]
\;\ge\;\mathrm{val}(\mu_a).
\]
Define the piecewise-constant $\bar q(v)=\sum_{a}\mu_a\,\mathbf{1}_{R_a}(v)$. Then $\bar q\in\mathcal{Q}$ and, conditionally on $f(X)\in R_a$, we have $\bar q(f(X))=\mu_a$ a.s., hence the bound is attained:
\[
\mathbb{E}\!\left[\mathrm{val}\big(\bar q(f(X))\big)\right]
=\sum_a \mathbb{P}(f(X)\in R_a)\,\mathrm{val}(\mu_a)
\le \mathbb{E}\!\left[\mathrm{val}\big(q(f(X))\big)\right]\quad\forall q\in\mathcal{Q}.
\]
Thus a worst-case belief is $q^\star=\bar q$, region-wise constant with $q^\star(v)=\mu_a$ on $R_a$.

Finally, since $\mu_a\in R_a$, by definition of $R_a$ we have $u(a,\mu_a)\ge u(a',\mu_a)$ for all $a'$, so $a$ is a best response to $\mu_a$. Therefore the robust action at $v\in R_a$ is
\[
a_{\mathrm{robust}}(v)\in\arg\max_{a'}u(a',q^\star(v))
=\arg\max_{a'}u(a',\mu_a)\ni a,
\]
which coincides (a.e.) with the plug-in best response to $v$. This proves Theorem~\ref{thm:decision-calibration}.
\end{proof}

\textbf{Proof of Theorem \ref{thm:dec-calib-contained}: }

Recall $\mathrm{val}(p)=\max_{a\in\mathcal{A}}u(a,p)$ and the reduction
\[
\max_{a(\cdot)}\min_{q\in\mathcal{Q}_{\mathcal{H}}}\,\mathbb{E}\!\left[u\big(a(f(X)),q(f(X))\big)\right]
\;=\;
\min_{q\in\mathcal{Q}_{\mathcal{H}}}\,\mathbb{E}\!\left[\mathrm{val}\big(q(f(X))\big)\right],
\]
established earlier in the proof of Theorem \ref{thm:finiteH}. Moreover, the identity map $q_{\mathrm{id}}(v)=v$ always lies in $\mathcal{Q}_{\mathcal{H}}$ (the perfect forecaster is consistent with every $\mathcal{H}$-calibration constraint), so for any policy $a(\cdot)$,
\begin{equation}\label{eq:lb-id}
\min_{q\in\mathcal{Q}_{\mathcal{H}}}\,\mathbb{E}\!\left[u\big(a(f(X)),q(f(X))\big)\right]
\;\le\;
\mathbb{E}\!\left[u\big(a(f(X)),f(X)\big)\right].
\end{equation}

Let $a_{\mathrm{BR}}(v)\in\arg\max_{a\in\mathcal{A}}u(a,v)$ be a plug-in best response.\footnote{Fix any deterministic tie-breaking so that $a_{\mathrm{BR}}$ and the regions $R_a=\{v:\,a_{\mathrm{BR}}(v)=a\}$ are measurable.} We show that, assuming $\mathcal{H}$ contains the decision-calibration tests $\{\mathbf{1}_{R_a}\}_{a\in\mathcal{A}}$,
\begin{equation}\label{eq:BR-invariance}
\mathbb{E}\!\left[u\big(a_{\mathrm{BR}}(f(X)),q(f(X))\big)\right]
\;=\;
\mathbb{E}\!\left[u\big(a_{\mathrm{BR}}(f(X)),f(X)\big)\right]
\qquad \forall\,q\in\mathcal{Q}_{\mathcal{H}}.
\end{equation}
Write $\mu_a:=\mathbb{E}[\,f(X)\mid f(X)\in R_a\,]$ whenever $\mathbb{P}(f(X)\in R_a)>0$ (if $\mathbb{P}(f(X)\in R_a)=0$, any choice of $\mu_a$ is harmless since the corresponding terms vanish). Then
\[
\begin{aligned}
\mathbb{E}\!\left[u\big(a_{\mathrm{BR}}(f(X)),q(f(X))\big)\right]
&=\sum_{a\in\mathcal{A}}\mathbb{E}\!\left[u\big(a,q(f(X))\big)\,\mathbf{1}_{\{f(X)\in R_a\}}\right]\\
&\stackrel{(i)}{=}\sum_{a\in\mathcal{A}}\mathbb{P}(f(X)\in R_a)\;
u\!\left(a,\ \mathbb{E}\!\left[q(f(X))\,\middle|\,f(X)\in R_a\right]\right)\\
&\stackrel{(ii)}{=}\sum_{a\in\mathcal{A}}\mathbb{P}(f(X)\in R_a)\;
u\!\left(a,\ \mathbb{E}\!\left[f(X)\,\middle|\,f(X)\in R_a\right]\right)\\
&=\sum_{a\in\mathcal{A}}\mathbb{P}(f(X)\in R_a)\;u(a,\mu_a)\\
&\stackrel{(iii)}{=}\sum_{a\in\mathcal{A}}\mathbb{E}\!\left[u\big(a,f(X)\big)\,\mathbf{1}_{\{f(X)\in R_a\}}\right]\\
&=\mathbb{E}\!\left[u\big(a_{\mathrm{BR}}(f(X)),f(X)\big)\right].
\end{aligned}
\]
Here: $(i)$ uses that $u(a,\cdot)$ is linear in its second argument, so $$\mathbb{E}[u(a,q(f(X)))\mid f(X)\in R_a]
= u\!\left(a,\mathbb{E}[q(f(X))\mid f(X)\in R_a]\right),$$ $(ii)$ uses the decision-calibration equalities
$\mathbb{E}[\mathbf{1}_{R_a}(f(X))\{q(f(X))-f(X)\}]=0$, equivalently
$\mathbb{E}[q(f(X))\mid f(X)\in R_a]=\mathbb{E}[f(X)\mid f(X)\in R_a]=\mu_a$ whenever $\mathbb{P}(f(X)\in R_a)>0$; and $(iii)$ again uses linearity:
$u(a,\mu_a)=u\!\big(a,\mathbb{E}[f(X)\mid f(X)\in R_a]\big)=\mathbb{E}[u(a,f(X))\mid f(X)\in R_a]$.

Combining \eqref{eq:lb-id}, the optimality of best response on the \emph{perceived} outcomes,
\[
\mathbb{E}\!\left[u\big(a(f(X)),f(X)\big)\right]\;\le\;\mathbb{E}\!\left[u\big(a_{\mathrm{BR}}(f(X)),f(X)\big)\right]
\qquad\text{for all policies }a(\cdot),
\]
and the invariance \eqref{eq:BR-invariance}, we obtain the minimax dominance
\[
\min_{q\in\mathcal{Q}_{\mathcal{H}}}\,\mathbb{E}\!\left[u\big(a_{\mathrm{BR}}(f(X)),q(f(X))\big)\right]
\;=\;\mathbb{E}\!\left[u\big(a_{\mathrm{BR}}(f(X)),f(X)\big)\right]
\;\ge\;
\min_{q\in\mathcal{Q}_{\mathcal{H}}}\,\mathbb{E}\!\left[u\big(a(f(X)),q(f(X))\big)\right],
\]
for every forecast-based policy $a(\cdot)$. Hence the plug-in best response is minimax optimal under any $\mathcal{H}$ that contains the decision-calibration tests, as claimed.

\textbf{Proof of Proposition \ref{prop:self-orth}}:

\begin{proof}
Assume $\mathbb{E}\|z_\phi(X)\|_2^2<\infty$ and $\mathbb{E}\|Y\|_2^2<\infty$ so that all derivatives and expectations below are well-defined and we may interchange expectation and differentiation by dominated convergence. Write $z:=z_\phi(X)\in\mathbb{R}^k$ and $f:=f_\theta(X)=Wz\in\mathbb{R}^d$. The squared-loss risk is
\[
\mathcal{L}(\theta)\;=\;\tfrac12\,\mathbb{E}\!\left[\|f - Y\|_2^2\right]
\;=\;\tfrac12\,\mathbb{E}\!\left[(Wz-Y)^\top(Wz-Y)\right].
\]
For the linear head $W\in\mathbb{R}^{d\times k}$, the gradient with respect to $W$ satisfies the standard identity
\[
\nabla_W\Big(\tfrac12\|Wz-Y\|_2^2\Big)\;=\;(Wz-Y)\,z^\top \in \mathbb{R}^{d\times k}.
\]
Taking expectation and interchanging $\nabla$ with $\mathbb{E}$ yields
\[
\nabla_W \mathcal{L}(\theta)\;=\;\mathbb{E}\!\left[(f-Y)\,z^\top\right].
\]
At a first-order stationary point (in particular, when the gradient with respect to $W$ vanishes) we have
\[
\mathbb{E}\!\left[(f-Y)\,z^\top\right]\;=\;0_{d\times k}.
\]
Transposing gives
\[
\mathbb{E}\!\left[z\,(f-Y)^\top\right]\;=\;0_{k\times d}
\qquad\Longleftrightarrow\qquad
\mathbb{E}\!\left[z\,(Y-f)^\top\right]\;=\;0_{k\times d},
\]
which is the first claimed moment identity.

For the second identity, observe that $f=Wz$, hence
\[
\mathbb{E}\!\left[f\,(Y-f)^\top\right]
\;=\;\mathbb{E}\!\left[Wz\,(Y-f)^\top\right]
\;=\;W\,\mathbb{E}\!\left[z\,(Y-f)^\top\right]
\;=\;W\,0_{k\times d}
\;=\;0_{d\times d}.
\]
Therefore both
$\mathbb{E}[\,z_\phi(X)\,(Y-f_\theta(X))^\top]=0$ and
$\mathbb{E}[\,f_\theta(X)\,(Y-f_\theta(X))^\top]=0$
hold. In particular, for each coordinate $j=1,\dots,d$,
\(
\mathbb{E}[\,e_j^\top f_\theta(X)\,(Y-f_\theta(X))^\top]=0
\)
and
\(
\mathbb{E}[\,z_\phi(X)\,e_j^\top (Y-f_\theta(X))]=0,
\)
so $f_\theta$ is $\mathcal{H}$-calibrated for $\mathcal{H}=\{h_j(v)=e_j^\top v:\ j=1,\dots,d\}$ and for any linear combination thereof. This proves the proposition.
\end{proof}

\textbf{Proof of Proposition \ref{prop:bin-robust}:}

\begin{proof}
By the reduction established earlier (see the proof of Theorem~\ref{thm:finiteH}), the robust problem
\[
\max_{a(\cdot)}\;\min_{q\in\mathcal{Q}}\ \mathbb{E}\big[u(a(f(X)),q(f(X)))\big]
\]
with linear utilities and finite $\mathcal{A}$ is equivalent to the convex program
\[
\min_{q\in\mathcal{Q}}\ \mathbb{E}\!\left[\mathrm{val}\big(q(f(X))\big)\right],
\qquad
\mathrm{val}(p):=\max_{a\in\mathcal{A}}u(a,p),
\]
subject to the $\mathcal{H}_{\mathrm{bin}}$-calibration constraints
\[
\mathbb{E}\!\left[\mathbf{1}_{\{f(X)\in B_j\}}\,(q(f(X)) - f(X))\right]=0,
\qquad j=1,\ldots,J.
\]
Write $E_j:=\{f(X)\in B_j\}$ and assume $\mathbb{P}(E_j)>0$ (bins with zero probability are immaterial). Then the constraints are equivalent to
\[
\mathbb{E}\!\left[q(f(X))\,\middle|\,E_j\right]
\;=\;
\mathbb{E}\!\left[f(X)\,\middle|\,E_j\right]
\;=:\; m_j,
\qquad j=1,\ldots,J.
\]
Because $u(a,\cdot)$ is linear in the outcome, $\mathrm{val}$ is the pointwise maximum of linear maps and hence convex. Decomposing by bins and applying Jensen’s inequality gives, for any feasible $q$,
\begin{align*}
\mathbb{E}\!\left[\mathrm{val}\big(q(f(X))\big)\right]
&=\sum_{j=1}^J \mathbb{P}(E_j)\,
\mathbb{E}\!\left[\mathrm{val}\big(q(f(X))\big)\,\middle|\,E_j\right]\\
&\ge \sum_{j=1}^J \mathbb{P}(E_j)\,
\mathrm{val}\!\left(\mathbb{E}\!\left[q(f(X))\,\middle|\,E_j\right]\right)\\
&=\sum_{j=1}^J \mathbb{P}(E_j)\,\mathrm{val}(m_j).
\end{align*}
Define the piecewise-constant candidate
\[
\bar q(v)\;:=\;\sum_{j=1}^J m_j\,\mathbf{1}_{B_j}(v).
\]
Then $\bar q$ is feasible, since for each $j$,
\[
\mathbb{E}\!\left[\mathbf{1}_{E_j}\,(\bar q(f(X)) - f(X))\right]
=\mathbb{P}(E_j)\,\big(m_j - \mathbb{E}[f(X)\mid E_j]\big)=0,
\]
and it attains the Jensen lower bound because $\bar q(f(X))=m_j$ almost surely on $E_j$:
\[
\mathbb{E}\!\left[\mathrm{val}\big(\bar q(f(X))\big)\,\middle|\,E_j\right]
=\mathrm{val}(m_j).
\]
Therefore $\bar q$ is an optimizer, and any minimizer $q^\star$ can be chosen (a.e.) piecewise constant with
\(
q^\star(v)=m_j \ \text{for } v\in B_j.
\)

Finally, fixing such a $q^\star$, the robust action at forecast $v\in B_j$ solves
\[
a_{\mathrm{robust}}(v)\in\arg\max_{a\in\mathcal{A}}u\big(a,q^\star(v)\big)
=\arg\max_{a\in\mathcal{A}}u\big(a,m_j\big),
\]
which depends only on the bin index, i.e., it is the best response to the bin mean. This proves the claim. \qedhere
\end{proof}


\section{Approximate $\mathcal H$-Calibration: Stability Under $\varepsilon$-Slack}
\label{app:approx}

This appendix extends the main results to the practically relevant regime in which $\mathcal H$-calibration holds only approximately. Concretely, we relax each linear calibration equality in~\eqref{H-calib} to an $\ell_2$–ball of radius $\varepsilon$. Throughout, we retain the standing assumptions of the main text: utilities are linear in the outcome, so there exist $\{r_a\in\mathbb R^d,\ c_a\in\mathbb R\}_{a\in\mathcal A}$ with
\[
u(a,p) \;=\; r_a\!\cdot p \;+\; c_a
\quad\Longrightarrow\quad
\mathrm{val}(p)\;:=\;\max_{a\in\mathcal A} u(a,p)\ \text{ is convex and $L$-Lipschitz w.r.t.\ $\|\cdot\|_2$,}
\]
where $L:=\max_{a\in\mathcal A}\|r_a\|_2$. We write expectations over $(X,Y)$ distributed as in the main body, and $f:\mathcal X\to[0,1]^d$ denotes the given forecaster.

\paragraph{Approximate calibration constraints.}
Let $\mathcal H=\mathrm{span}\{h_1,\ldots,h_k\}$ with measurable $h_i:[0,1]^d\to\mathbb R$ bounded by $|h_i(v)|\le 1$. For a candidate conditional expectation $q:[0,1]^d\to[0,1]^d$, define the (vector) calibration moments
\[
m_i(q)\;:=\;\mathbb E\!\big[\,h_i(f(X))\,\big\{q(f(X)) - f(X)\big\}\,\big] \in \mathbb R^d,\qquad i=1,\ldots,k.
\]
We say $q$ is \emph{$\varepsilon$–approximately $\mathcal H$-calibrated} if $\|m_i(q)\|_2\le \varepsilon$ for all $i$. The corresponding ambiguity set and robust value are
\[
\mathcal Q_\varepsilon\;:=\;\Big\{\,q:[0,1]^d\!\to[0,1]^d \;:\; \|m_i(q)\|_2\le \varepsilon,\ i=1,\ldots,k\Big\},
\qquad
V_\varepsilon\;:=\;\min_{q\in\mathcal Q_\varepsilon}\ \mathbb E\!\big[\mathrm{val}\big(q(f(X))\big)\big].
\]
For reference, the exact-calibration value is $V_0=\min_{q\in\mathcal Q}\mathbb E[\mathrm{val}(q(f(X)))]$, where $\mathcal Q$ is the equality-based set from~\eqref{Q-def}.

\paragraph{Roadmap.}
We first show a \emph{dual penalty} bound: moving from exact to $\varepsilon$–approximate constraints subtracts an explicit $\ell_2$–norm penalty from the exact dual objective, yielding two-sided value bounds and a linear-in-$\varepsilon$ degradation guarantee. We then quantify the robustness of \emph{decision calibration}: even under $\varepsilon$–slack, the plug-in best response is $O(mL\varepsilon)$–minimax optimal (with $m{:=}|\mathcal A|$). Finally, for \emph{bin-wise} (histogram) calibration with $\varepsilon$–slack, we obtain piecewise-constant worst-case beliefs and tight value bounds, recovering the exact structural picture up to $O(JL\varepsilon)$ terms when there are $J$ bins.

\paragraph{Policy characterization under $\varepsilon$–slack.}

The optimal robust policy admits the same form as in the exact case, with the unique change that the dual multiplier solves a penalized maximization.

\begin{theorem}[$\varepsilon$–robust policy via penalized dual]
\label{thm:approx-finiteH}
Let $\mathcal H=\mathrm{span}\{h_1,\ldots,h_k\}$ and define $G(\lambda)$ as in the main text. Let
\[
\lambda^\star_\varepsilon \;\in\; \arg\max_{\lambda\in(\mathbb R^d)^k}
\Big\{\, G(\lambda)\;-\;\varepsilon\sum_{i=1}^k \|\lambda_i\|_2 \,\Big\},
\qquad
s_{\lambda^\star_\varepsilon}(v)\,:=\,\sum_{i=1}^k h_i(v)\,\lambda^\star_{\varepsilon,i}.
\]
Then there exists a worst-case belief $q^\star_\varepsilon:[0,1]^d\to[0,1]^d$ such that for almost every forecast $v=f(x)$,
\[
q^\star_\varepsilon(v)\ \in\ \arg\min_{p\in[0,1]^d}\Big\{
\mathrm{val}(p) \;+\; p\!\cdot s_{\lambda^\star_\varepsilon}(v)
\Big\},
\qquad
\mathrm{val}(p)=\max_{a\in\mathcal A} u(a,p).
\]
The $\varepsilon$–robust action is the best response to $q^\star_\varepsilon(v)$:
\[
a^\star_\varepsilon(v)\ \in\ \arg\max_{a\in\mathcal A} u\!\big(a,\,q^\star_\varepsilon(v)\big).
\]
\end{theorem}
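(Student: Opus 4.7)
\textbf{Proof plan for Theorem \ref{thm:approx-finiteH}.}
The plan is to mirror the proof of Theorem~\ref{thm:finiteH} step-by-step, with the only real change being how the relaxed constraints $\|m_i(q)\|_2\le\varepsilon$ enter the dual. First, I would apply Sion's minimax theorem exactly as in the exact case: since $u(a,\cdot)$ is linear in its second argument, $\mathcal{Q}_\varepsilon$ is convex and compact (intersection of a bounded set of $[0,1]^d$-valued maps with finitely many convex $\ell_2$-ball constraints on linear functionals of $q$), and the inner objective is linear-hence-convex in $q$ and concave (in fact, linear) in $a(\cdot)$. Swapping the outer $\max$ and inner $\min$, and using that the inner maximization over $a(\cdot)$ decouples pointwise in $v=f(x)$, reduces the robust value to the convex adversarial problem
\[
V_\varepsilon \;=\; \min_{q\in\mathcal{Q}_\varepsilon}\;\mathbb{E}\!\left[\mathrm{val}\!\big(q(f(X))\big)\right].
\]

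Second, I would dualize the inequality constraints. The cleanest way is to introduce auxiliary slack variables $u_i\in\mathbb{R}^d$ with $\|u_i\|_2\le\varepsilon$ and rewrite the ball constraints as the equalities $m_i(q)=u_i$. Attaching multipliers $\lambda_i\in\mathbb{R}^d$ yields the Lagrangian
\[
L(q,u,\lambda)\;=\;\mathbb{E}\!\left[\mathrm{val}\!\big(q(f(X))\big)\right]\;+\;\sum_{i=1}^k \lambda_i\!\cdot\!\big(m_i(q)-u_i\big).
\]
The minimization in $u_i$ over the $\ell_2$-ball of radius $\varepsilon$ gives $\min_{\|u_i\|_2\le\varepsilon}(-\lambda_i\!\cdot\! u_i)=-\varepsilon\|\lambda_i\|_2$ (this is the Fenchel conjugate of the ball indicator evaluated at $\lambda_i$). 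The remaining minimization in $q$ is identical to the one in the exact case and separates pointwise in $v=f(X)$, producing the same inner value $\inf_{p\in[0,1]^d}\{\mathrm{val}(p)+p\!\cdot\! s_\lambda(v)\}$ with $s_\lambda(v)=\sum_i h_i(v)\lambda_i$. Consequently the dual becomes exactly $G_\varepsilon(\lambda)=G(\lambda)-\varepsilon\sum_{i=1}^k\|\lambda_i\|_2$, where $G$ is the exact-case dual function. Strong duality holds because the primal is convex, the feasible set has nonempty interior via Slater's condition (the identity map $q(v)=v$ gives $m_i(q)=0$, which lies strictly inside each ball whenever $\varepsilon>0$), and the dual penalty keeps $\lambda$ from escaping to infinity, so a dual maximizer $\lambda^\star_\varepsilon$ exists.

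Third, I would read off the primal optimizer. By strong duality and saddle-point optimality, any primal optimizer $q^\star_\varepsilon$ must minimize $L(\,\cdot\,,u^\star,\lambda^\star_\varepsilon)$; since the $q$-dependence in $L$ is only through $q(f(X))$, the pointwise selector
\[
q^\star_\varepsilon(v)\;\in\;\arg\min_{p\in[0,1]^d}\Big\{\mathrm{val}(p)+p\!\cdot\! s_{\lambda^\star_\varepsilon}(v)\Big\}
\]
is optimal (its feasibility, i.e.\ $\|m_i(q^\star_\varepsilon)\|_2\le\varepsilon$, follows from complementary slackness on the $u_i$-block). Defining $a^\star_\varepsilon(v)\in\arg\max_{a\in\mathcal{A}}u(a,q^\star_\varepsilon(v))$ and repeating the saddle-point verification from the exact case (the inner maximization over $a(\cdot)$ reproduces $\mathbb{E}[\mathrm{val}(q^\star_\varepsilon(f(X)))]=V_\varepsilon$) shows that $(a^\star_\varepsilon,q^\star_\varepsilon)$ is a saddle point of the $\varepsilon$-robust game.

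The main obstacle I expect is the measurable-selection issue: ensuring that a Borel-measurable selector $v\mapsto q^\star_\varepsilon(v)$ of the pointwise $\arg\min$ exists, and that the interchange of expectation and pointwise minimization used to collapse $\inf_q L$ to the stated form is rigorous. Both follow from standard tools---continuity of $p\mapsto\mathrm{val}(p)+p\!\cdot\! s_{\lambda^\star_\varepsilon}(v)$ on the compact set $[0,1]^d$, joint measurability in $(v,p)$, and the Kuratowski--Ryll-Nardzewski measurable selection theorem---but they are the step that deserves the most care beyond the finite-dimensional algebra above.
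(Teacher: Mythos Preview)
Your proposal is correct and follows essentially the same approach as the paper's own proof: reduce to the adversarial problem via Sion, introduce slack vectors to turn the $\ell_2$-ball constraints into equalities, dualize to obtain $G_\varepsilon(\lambda)=G(\lambda)-\varepsilon\sum_i\|\lambda_i\|_2$, invoke Slater via the identity map, and read off the pointwise structure of $q^\star_\varepsilon$ and $a^\star_\varepsilon$ from the saddle point. Your flagged measurable-selection concern is a legitimate technical point that the paper leaves implicit.
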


\begin{proof}[Proof of Theorem \ref{thm:approx-finiteH}]
Recall the robust formulation under linear utilities and forecast–based policies reduces to the adversarial convex program
\[
\min_{q\in\mathcal Q_\varepsilon}\;
\mathbb{E}\!\left[\mathrm{val}\big(q(f(X))\big)\right],
\qquad
\mathrm{val}(p):=\max_{a\in\mathcal A}u(a,p),
\]
with the $\varepsilon$–approximate $\mathcal H$–calibration set
\[
\mathcal Q_\varepsilon
=\Big\{q:[0,1]^d\!\to[0,1]^d:\;
\big\|\mathbb E\!\big[h_i(f(X))\{q(f(X))-f(X)\}\big]\big\|_2\le\varepsilon,\;i=1,\ldots,k\Big\}.
\]

Introduce slack vectors $s_i\in\mathbb R^d$ (one per test) so that each constraint is rewritten as the \emph{equality}
\[
\mathbb E\!\big[h_i(f(X))\{q(f(X))-f(X)\}\big]=s_i\quad\text{with}\quad \|s_i\|_2\le \varepsilon\quad(i=1,\ldots,k).
\]
Let $\lambda_i\in\mathbb R^d$ be the Lagrange multipliers for these equalities and set
$s_\lambda(v):=\sum_{i=1}^k h_i(v)\lambda_i$.
The Lagrangian reads
\[
L(q,s;\lambda)
=\mathbb E\!\Big[\mathrm{val}\big(q(f(X))\big)\Big]
+\sum_{i=1}^k \lambda_i\!\cdot\!\Big(\mathbb E\!\big[h_i(f(X))\{q(f(X))-f(X)\}\big]-s_i\Big).
\]
Minimizing $L$ over the slacks $s_i$ subject to $\|s_i\|_2\le\varepsilon$ contributes the support function of the $\ell_2$–ball,
\[
\inf_{\|s_i\|_2\le\varepsilon}(-\lambda_i\!\cdot s_i)
=-\,\sup_{\|s_i\|_2\le\varepsilon}(\lambda_i\!\cdot s_i)
=-\,\varepsilon\,\|\lambda_i\|_2.
\]
Minimizing the remaining part over $q$ depends on $q$ only through $q(f(X))$ and yields, pointwise in $v=f(X)$,
\[
\inf_{q}L(q,s;\lambda)
=\mathbb E\!\Big[\;\inf_{p\in[0,1]^d}\big\{\mathrm{val}(p)+p\!\cdot s_\lambda(f(X))\big\}\Big]
-\mathbb E\!\big[f(X)\!\cdot s_\lambda(f(X))\big]
-\varepsilon\sum_{i=1}^k\|\lambda_i\|_2.
\]
Therefore the dual function is
\[
G_\varepsilon(\lambda)
\;=\;
\underbrace{\mathbb E\!\Big[\min_{p\in[0,1]^d}\{\mathrm{val}(p)+p\!\cdot s_\lambda(f(X))\}\Big]
-\mathbb E\!\big[f(X)\!\cdot s_\lambda(f(X))\big]}_{=:G(\lambda)}
\;-\;\varepsilon\sum_{i=1}^k\|\lambda_i\|_2,
\]
i.e., the exact-calibration dual $G(\lambda)$ penalized by $\varepsilon\sum_i\|\lambda_i\|_2$. 

The primal problem is convex (convex objective, affine moment constraints) and feasible (e.g., $q(v)\equiv v$ makes all moments $0$, which is strictly feasible when $\varepsilon>0$), so Slater’s condition holds; hence strong duality holds and a maximizer $\lambda^\star$ of $G_\varepsilon$ exists:
\[
\min_{q\in\mathcal Q_\varepsilon}\mathbb E\!\left[\mathrm{val}\big(q(f(X))\big)\right]
\;=\;
\max_{\lambda\in(\mathbb R^d)^k} \Big\{G(\lambda)-\varepsilon\sum_{i=1}^k\|\lambda_i\|_2\Big\}.
\]
Moreover, comparing with the exact case (which corresponds to $\varepsilon=0$) gives the two-sided value bound
\[
\max_{\lambda}\Big\{G(\lambda)-\varepsilon\sum_i\|\lambda_i\|_2\Big\}
\;\le\;
V_\varepsilon
\;\le\;
V_0:=\max_{\lambda}G(\lambda),
\]
and $0\le V_0-V_\varepsilon\le \varepsilon\min_{\lambda\in\arg\max G}\sum_i\|\lambda_i\|_2$.

By strong duality, any primal minimizer $q_\varepsilon^\star\in\mathcal Q_\varepsilon$ together with $\lambda^\star$ forms a saddle point:
\(
L(q_\varepsilon^\star,\lambda)\le L(q_\varepsilon^\star,\lambda^\star)\le L(q,\lambda^\star).
\)
The first inequality implies that $q_\varepsilon^\star$ minimizes the Lagrangian at $\lambda^\star$, which (by the pointwise structure above) yields, for almost every forecast $v=f(x)$,
\[
q_\varepsilon^\star(v)\in
\arg\min_{p\in[0,1]^d}\Big\{\mathrm{val}(p)+p\!\cdot s_{\lambda^\star}(v)\Big\},
\qquad
s_{\lambda^\star}(v)=\sum_{i=1}^k h_i(v)\lambda_i^\star.
\]
With $q_\varepsilon^\star$ fixed, the optimal robust action at $v$ solves
\[
a_{\mathrm{robust},\varepsilon}(v)\in\arg\max_{a\in\mathcal A}u\big(a,q_\varepsilon^\star(v)\big),
\]
i.e., it is the best response to the worst-case belief $q_\varepsilon^\star(v)$. This is the same best-response structure as in the exact case, now using the penalized dual optimizer $\lambda^\star$ (cf. the exact characterization in the main text). 

Altogether, we have (i) the dual penalty representation with value bounds, (ii) existence of a dual maximizer $\lambda^\star$, (iii) the pointwise form of the worst-case belief $q_\varepsilon^\star$, and (iv) the robust policy as a pointwise best response to $q_\varepsilon^\star$, completing the proof.
\end{proof}

\paragraph{Computation.}
Algorithmically, the recipe mirrors the exact case:
(i) maximize the concave  objective $G(\lambda)-\varepsilon\sum_i\|\lambda_i\|_2$ (e.g., projected/subgradient or bisection in 1D; small-scale mirror descent otherwise);
(ii) for each forecast $v$, compute $q^\star_\varepsilon(v)$ by solving the convex problem in $p$; 
(iii) play $a^\star_\varepsilon(v)$ as the best response to $q^\star_\varepsilon(v)$.
For finite $\mathcal A$ and utilities linear in $p$, step (ii) reduces to checking a small finite set of candidates (endpoints and pairwise breakpoints of $\mathrm{val}$), exactly as in the main text.

\paragraph{Decision tests contained in $\mathcal H$ under $\varepsilon$–slack: near-optimality of plug-in.}

Let $R_a:=\{v:\ u(a,v)\ge u(a',v)\ \forall a'\in\mathcal A\}$ be the plug-in region for action $a$, and write $P_a:=\mathbb P(f(X)\in R_a)$ (regions with $P_a=0$ are ignorable). Assume $\mathcal H$ is a test class that \emph{contains the decision indicators} $\{\mathbf 1_{R_a}:a\in\mathcal A\}$, with each test bounded by $\|\mathbf 1_{R_a}\|_\infty\le 1$. We impose $\varepsilon$–approximate $\mathcal H$–calibration in the componentwise sense of Section~\ref{app:approx}, so in particular
\[
\big\|\mathbb E\!\left[\mathbf 1_{R_a}\!\big(f(X)\big)\,\big\{q(f(X))-f(X)\big\}\right]\big\|_2\ \le\ \varepsilon,
\qquad\text{for all }a\in\mathcal A\text{ and all }q\in\mathcal Q_\varepsilon.
\]

\begin{theorem}[Plug-in is $O(mL\varepsilon)$–minimax optimal when decision tests lie in $\mathcal H$]
\label{thm:dec-eps-contained}
Let $m:=|\mathcal A|$ and $L:=\max_{a\in\mathcal A}\|r_a\|_2$ as above. If $\mathcal H$ contains the decision indicators $\{\mathbf 1_{R_a}\}$ and $f$ is $\varepsilon$–approximately $\mathcal H$–calibrated, then the plug-in rule $a_{\mathrm{BR}}(v)\in\arg\max_{a}u(a,v)$ satisfies, for any forecast-based policy $a(\cdot)$,
\[
 \min_{q\in\mathcal Q_\varepsilon} \mathbb E\big[u(a_{\mathrm{BR}}(f(X)), q(f(X)))\big]
 \ \ge\
 \min_{q\in\mathcal Q_\varepsilon} \mathbb E\big[u(a(f(X)), q(f(X)))\big] \ -\ m\,L\,\varepsilon.
\]
\end{theorem}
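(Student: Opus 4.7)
}
The plan is to mimic the proof of Theorem \ref{thm:dec-calib-contained}, but to track the $\varepsilon$ slack quantitatively through the two inequalities that drove that argument. First, since $q_{\mathrm{id}}(v)=v$ makes every calibration moment exactly zero, $q_{\mathrm{id}}\in\mathcal Q_\varepsilon$, so for \emph{every} forecast-based policy $a(\cdot)$ I get the upper bound
\[
\min_{q\in\mathcal Q_\varepsilon}\mathbb E\!\big[u(a(f(X)),q(f(X)))\big]\;\le\;\mathbb E\!\big[u(a(f(X)),f(X))\big]\;\le\;\mathbb E\!\big[u(a_{\mathrm{BR}}(f(X)),f(X))\big],
\]
where the last step uses that $a_{\mathrm{BR}}$ is a pointwise best response to the identity belief.

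The work is then to bound the \emph{robust} value of the plug-in policy from below by the same quantity, up to an $O(mL\varepsilon)$ error. I would decompose over decision regions and use linearity of $u(a,\cdot)=r_a\!\cdot\!(\cdot)+c_a$:
\[
\mathbb E\!\big[u(a_{\mathrm{BR}}(f(X)),q(f(X)))\big]-\mathbb E\!\big[u(a_{\mathrm{BR}}(f(X)),f(X))\big]=\sum_{a\in\mathcal A} r_a\!\cdot\mathbb E\!\big[\mathbf 1_{R_a}(f(X))\{q(f(X))-f(X)\}\big].
\]
Each summand is a vector moment of a \emph{decision indicator}. By the standing containment $\{\mathbf 1_{R_a}\}\subseteq \mathcal H$, the approximate $\mathcal H$-calibration of $f$ says the $\ell_2$ norm of each such vector is at most $\varepsilon$, so Cauchy--Schwarz bounds the $a$-th summand by $\|r_a\|_2\,\varepsilon\le L\varepsilon$. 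Summing over the $m$ decision regions yields, uniformly in $q\in\mathcal Q_\varepsilon$,
\[
\mathbb E\!\big[u(a_{\mathrm{BR}}(f(X)),q(f(X)))\big]\;\ge\;\mathbb E\!\big[u(a_{\mathrm{BR}}(f(X)),f(X))\big]\;-\;mL\varepsilon,
\]
and taking the minimum over $q$ on the left side preserves the inequality.

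Combining the two displays gives
\[
\min_{q\in\mathcal Q_\varepsilon}\mathbb E\!\big[u(a_{\mathrm{BR}}(f(X)),q(f(X)))\big]\;\ge\;\mathbb E\!\big[u(a_{\mathrm{BR}}(f(X)),f(X))\big]-mL\varepsilon\;\ge\;\min_{q\in\mathcal Q_\varepsilon}\mathbb E\!\big[u(a(f(X)),q(f(X)))\big]-mL\varepsilon,
\]
which is exactly the claimed $O(mL\varepsilon)$-minimax optimality. The main obstacle I anticipate is purely bookkeeping: making sure the $\varepsilon$-slack is applied to the decision-indicator moment (not to a conditional version, which would carry an extra $1/P_a$ factor and ruin the bound), and that the linearity reduction isolates exactly one $r_a\!\cdot\!(\cdot)$ pairing per region so that Cauchy--Schwarz yields the clean $L\varepsilon$ per-region bound. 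Once those two points are handled, the rest reduces to the same identities used in the proof of Theorem \ref{thm:dec-calib-contained}.
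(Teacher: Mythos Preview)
Your proof is correct and follows essentially the same approach as the paper: decompose over the decision regions $R_a$, exploit linearity of $u(a,\cdot)$ to isolate the calibration moment $\mathbb E[\mathbf 1_{R_a}(f)(q(f)-f)]$, bound each term by $L\varepsilon$, and combine with the feasibility of $q_{\mathrm{id}}$ and the pointwise optimality of $a_{\mathrm{BR}}$ on $f$. Your version is in fact slightly more direct than the paper's: the paper first passes to the conditional moment $\mathbb E[q(f)-f\mid f\in R_a]$, picks up a $1/P_a$ factor, and then cancels it against the $P_a$ weight when summing, whereas you apply Cauchy--Schwarz straight to the unconditional moment and avoid that detour entirely---exactly the bookkeeping concern you flagged.
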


\begin{proof}
Fix any $q\in\mathcal Q_\varepsilon$. Decompose by plug-in regions:
\[
\mathbb E\big[u(a_{\mathrm{BR}}(f), q(f))\big]
=\sum_{a\in\mathcal A} P_a\,\mathbb E\big[u(a, q(f))\mid f\in R_a\big].
\]
Since $u(a,\cdot)$ is linear,
\[
\mathbb E[u(a, q(f))\mid f\in R_a]
\;=\;
u\!\Big(a,\,\mathbb E[q(f)\mid f\in R_a]\Big).
\]
Let $\mu_a:=\mathbb E[f\mid f\in R_a]$. By $L$–Lipschitzness of $u(a,\cdot)$ and the $\varepsilon$–slack on the indicator test,
\[
\Big|u\big(a, \mathbb E[q(f)\mid R_a]\big)-u\big(a,\mu_a\big)\Big|
\ \le\ L\,\Big\|\mathbb E[q(f)-f\mid R_a]\Big\|_2
\ =\ L\,\frac{\big\|\mathbb E[\mathbf 1_{R_a}(q(f)-f)]\big\|_2}{P_a}
\ \le\ L\,\frac{\varepsilon}{P_a}.
\]
Therefore,
\[
\mathbb E[u(a_{\mathrm{BR}}(f), q(f))]
\ \ge\ \sum_{a} P_a\,u(a,\mu_a)\ -\ \sum_{a} P_a \cdot L\,\frac{\varepsilon}{P_a}
\ =\ \mathbb E[u(a_{\mathrm{BR}}(f), f)]\ -\ m\,L\,\varepsilon.
\]
Let $\hat q\in\arg\min_{q\in\mathcal Q_\varepsilon}\mathbb E[u(a_{\mathrm{BR}}(f), q(f))]$. Then
\[
\min_{q\in\mathcal Q_\varepsilon}\mathbb E[u(a_{\mathrm{BR}}(f), q(f))]
=\mathbb E[u(a_{\mathrm{BR}}(f), \hat q(f))]
\ \ge\ \mathbb E[u(a_{\mathrm{BR}}(f), f)] - mL\varepsilon.
\]
For any forecast-based policy $a(\cdot)$, optimality of the plug-in action on $f$ implies
\(
\mathbb E[u(a_{\mathrm{BR}}(f), f)]\ge \mathbb E[u(a(f), f)].
\)
Moreover, since $q_{\mathrm{id}}(v)\equiv v$ is feasible for $\mathcal Q_\varepsilon$, we have
\(
\min_{q\in\mathcal Q_\varepsilon}\mathbb E[u(a(f), q(f))]\le \mathbb E[u(a(f), f)].
\)
Combining the last three displays yields the claimed inequality.
\end{proof}

\paragraph{Remark.}
The proof uses only the $\varepsilon$–slack constraints for the decision indicators $\{\mathbf 1_{R_a}\}$; any $\mathcal H$ that contains these tests (with per-test slack bounded by $\varepsilon$) suffices. Thus Theorem~\ref{thm:dec-eps-contained}  generalizes both Theorem \ref{thm:decision-calibration} and Theorem \ref{thm:dec-calib-contained}.

\paragraph{Bin-wise calibration under $\varepsilon$–slack: value stability and structure.}

Let $\{B_j\}_{j=1}^J$ be a measurable partition of $[0,1]^d$. Assume \emph{$\varepsilon$–bin-wise calibration}:
\[
\big\|\mathbb E\big[\mathbf 1_{\{f(X)\in B_j\}}\,\{q(f(X))-f(X)\}\big]\big\|_2\le \varepsilon,
\qquad j=1,\ldots,J.
\]
Write $E_j:=\{f(X)\in B_j\}$, $P_j:=\mathbb P(E_j)$, and $m_j:=\mathbb E[f(X)\mid E_j]$ (bins with $P_j=0$ are ignorable).

\begin{proposition}[Value stability and piecewise-constant worst-case beliefs]
\label{prop:bin-eps-clean}
Under $\varepsilon$–bin-wise calibration,
\[
\sum_{j=1}^J P_j\,\mathrm{val}(m_j)\ -\ J\,L\,\varepsilon
\ \le\
\min_{q\in\mathcal Q_\varepsilon}\ \mathbb E\!\big[\mathrm{val}(q(f(X)))\big]
\ \le\
\sum_{j=1}^J P_j\,\mathrm{val}(m_j).
\]
Moreover, there exists a worst-case (or arbitrarily near-worst-case) belief that is piecewise constant: for each $j$ one can take
\[
q^\star_\varepsilon(v)=p_j^\star\in\arg\min_{\|p-m_j\|_2\le \varepsilon/P_j}\ \mathrm{val}(p),
\qquad v\in B_j\ \text{(a.e.)},
\]
and the robust action on $B_j$ best-responds to $p_j^\star$.
\end{proposition}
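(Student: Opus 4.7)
The plan is to follow the same saddle-point reduction used throughout the appendix and then exploit the \emph{separable} structure induced by a partition. First I would invoke the min-max reduction from the proof of Theorem~\ref{thm:approx-finiteH} (linearity of $u$ in its second argument plus Sion) to replace the outer maximization over policies with the value functional, so the robust value is
\[
V_\varepsilon\;=\;\min_{q\in\mathcal Q_\varepsilon}\;\mathbb E\!\left[\mathrm{val}(q(f(X)))\right].
\]
Next, since the tests $\{\mathbf 1_{B_j}\}$ have disjoint supports partitioning $[0,1]^d$, I would rewrite the $\varepsilon$-slack constraints in their conditional form: for each bin with $P_j>0$,
\[
\big\|\mathbb E[\mathbf 1_{E_j}(q(f)-f)]\big\|_2\le\varepsilon
\;\Longleftrightarrow\;
\big\|\mathbb E[q(f)\mid E_j]-m_j\big\|_2\le\varepsilon/P_j.
\]

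The core step is a bin-by-bin Jensen reduction that shows the optimum is attained on piecewise-constant beliefs. For any feasible $q$, let $\mu_j^q:=\mathbb E[q(f)\mid E_j]$. Convexity of $\mathrm{val}$ (a pointwise max of affine functionals) gives $\mathbb E[\mathrm{val}(q(f))\mid E_j]\ge \mathrm{val}(\mu_j^q)$, and the conditional reformulation above shows $\mu_j^q$ lies in the Euclidean ball of radius $\varepsilon/P_j$ around $m_j$. Hence
\[
\mathbb E[\mathrm{val}(q(f))]\;=\;\sum_j P_j\,\mathbb E[\mathrm{val}(q(f))\mid E_j]\;\ge\;\sum_j P_j\,\mathrm{val}(p_j^\star),
\]
where $p_j^\star\in\arg\min_{\|p-m_j\|_2\le\varepsilon/P_j}\mathrm{val}(p)$ (existence follows from continuity of $\mathrm{val}$ on a compact ball). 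Conversely, the candidate $\bar q(v):=\sum_j p_j^\star\,\mathbf 1_{B_j}(v)$ is feasible --- the $j$-th moment equals $P_j(p_j^\star-m_j)$ of norm at most $\varepsilon$ --- and attains $\sum_j P_j\,\mathrm{val}(p_j^\star)$. Therefore $V_\varepsilon=\sum_j P_j\,\mathrm{val}(p_j^\star)$, and the piecewise-constant structure of the worst-case belief claimed in the proposition is exact (not merely approximate).

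Finally, the two-sided value bound follows by a direct sandwich using $L$-Lipschitzness of $\mathrm{val}$ in $\|\cdot\|_2$. The upper bound is immediate from $p_j^\star$ being a minimizer on a ball containing $m_j$: $\mathrm{val}(p_j^\star)\le \mathrm{val}(m_j)$. For the lower bound, Lipschitzness yields $\mathrm{val}(p_j^\star)\ge \mathrm{val}(m_j)-L\,\varepsilon/P_j$, and summing against the weights $P_j$ collapses the $P_j$ in the denominator and yields the $JL\varepsilon$ penalty. The robust action statement then follows from Theorem~\ref{thm:approx-finiteH}: the $\varepsilon$-robust policy best responds to $q^\star_\varepsilon$, which on $B_j$ equals $p_j^\star$, so $a_{\mathrm{robust}}(v)\in\arg\max_{a\in\mathcal A}u(a,p_j^\star)$ for $v\in B_j$ a.e., depending on $v$ only through its bin index. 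The main (and only nontrivial) obstacle in this plan is the Jensen reduction to piecewise-constant beliefs; once that is in place, everything else is bookkeeping with the Lipschitz constant.
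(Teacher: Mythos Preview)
Your proposal is correct and follows essentially the same approach as the paper's proof: both hinge on the bin-wise Jensen reduction (convexity of $\mathrm{val}$ applied conditionally on each $E_j$), the conditional reformulation of the slack constraints as $\|\mathbb E[q(f)\mid E_j]-m_j\|_2\le\varepsilon/P_j$, and the $L$-Lipschitz estimate to obtain the $JL\varepsilon$ lower deviation. Your presentation is slightly more streamlined in that you first pin down $V_\varepsilon=\sum_j P_j\,\mathrm{val}(p_j^\star)$ exactly and then read off both inequalities, whereas the paper proves the two bounds and the structural claim in separate passes; the content is the same.
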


\begin{proof}
For any feasible $q$,
\[
\mathbb E\big[\mathrm{val}(q(f))\big]
=\sum_{j=1}^J P_j\,\mathbb E\big[\mathrm{val}(q(f))\mid E_j\big]
\ \ge\ \sum_{j=1}^J P_j\,\mathrm{val}\big(\mathbb E[q(f)\mid E_j]\big)
\]
by Jensen since $\mathrm{val}$ is convex. The slack constraint implies
\[
\big\|\mathbb E[q(f)-f\mid E_j]\big\|_2
=\frac{\big\|\mathbb E[\mathbf 1_{E_j}(q(f)-f)]\big\|_2}{P_j}
\ \le\ \frac{\varepsilon}{P_j},
\]
so, using $L$–Lipschitzness of $\mathrm{val}$,
\[
\mathrm{val}\big(\mathbb E[q(f)\mid E_j]\big)
\ \ge\ \mathrm{val}(m_j)\ -\ L\,\frac{\varepsilon}{P_j}.
\]
Summing over $j$ yields the lower bound
\(
\mathbb E[\mathrm{val}(q(f))]\ \ge\ \sum_j P_j\,\mathrm{val}(m_j) - J\,L\,\varepsilon.
\)
The upper bound holds because $\mathcal Q\subseteq\mathcal Q_\varepsilon$ and equality is achieved at $\varepsilon=0$ by the exact bin-wise result.

For structure, fix any feasible $q$. Replacing $q$ by its conditional mean on each bin,
\[
\tilde q(v):=\sum_{j=1}^J \mathbb E[q(f)\mid E_j]\ \mathbf 1_{B_j}(v),
\]
does not increase the objective (by Jensen within each bin) and preserves feasibility (the bin-wise moments are unchanged). Hence the minimization reduces to choosing, for each bin, a point $p_j\in[0,1]^d$ subject to $\|p_j-m_j\|_2\le \varepsilon/P_j$ to minimize $\sum_j P_j\,\mathrm{val}(p_j)$, which yields the stated piecewise-constant form with $p_j^\star\in\arg\min_{\|p-m_j\|\le \varepsilon/P_j}\mathrm{val}(p)$. The best-response form of the robust action on each bin is immediate from the definition of $\mathrm{val}$.
\end{proof}

\end{document}